\setlist{nosep}
\DeclareMathOperator*{\argmin}{argmin}
\DeclareMathOperator{\prox}{prox}
\DeclareMathOperator{\dom}{dom}
\newtheorem{proposition}{Proposition}
\newtheorem{theorem}{Theorem}
\newcommand{\mon}{monDEQ}
\begin{document}
\setlength{\abovedisplayskip}{3.5pt}
\setlength{\belowdisplayskip}{3.5pt}
\setlength{\textfloatsep}{12pt}
\interfootnotelinepenalty=10000

\title{Monotone operator equilibrium networks}
\date{}
\author{Ezra Winston\\
School of Computer Science\\
Carnegie Mellon University\\
Pittsburgh, United States\\
\texttt{ewinston@cs.cmu.edu}
\And
J. Zico Kolter\\
School of Computer Science\\
Carnegie Mellon University\\
\& Bosch Center for AI\\
Pittsburgh, United States\\
\texttt{zkolter@cs.cmu.edu}
}

\maketitle

\begin{abstract}

Implicit-depth models such as Deep Equilibrium Networks have recently been shown to match or exceed the performance of traditional deep networks while being much more memory efficient. However, these models suffer from unstable convergence to a solution and lack guarantees that a solution exists. On the other hand, Neural ODEs, another class of implicit-depth models, do guarantee existence of a unique solution but perform poorly compared with traditional networks. In this paper, we develop a new class of implicit-depth model based on the theory of monotone operators, the Monotone Operator Equilibrium Network (\mon{}). We show the close connection between finding the equilibrium point of an implicit network and solving a form of monotone operator splitting problem, which admits efficient solvers with guaranteed, stable convergence. We then develop a parameterization of the network which ensures that all operators remain monotone, which guarantees the existence of a unique equilibrium point. Finally, we show how to instantiate several versions of these models, and implement the resulting iterative solvers, for structured linear operators such as multi-scale convolutions. The resulting models vastly outperform the Neural ODE-based models while also being more computationally efficient. Code is available at \url{http://github.com/locuslab/monotone_op_net}.

\end{abstract}

\section{Introduction}
Recent work in deep learning has demonstrated the power of \emph{implicit-depth} networks, models where features are created not by explicitly iterating some number of nonlinear layers, but by finding a solution to some implicitly defined equation.  Instances of such models include the Neural ODE \cite{chen2018neural}, which computes hidden layers as the solution to a continuous-time dynamical system, and the Deep Equilibrium (DEQ) Model \cite{bai2019deep}, which finds a fixed point of a nonlinear dynamical system corresponding to an effectively infinite-depth weight-tied network. These models, which trace back to some of the original work on recurrent backpropagation \cite{almeida1990learning,pineda1988generalization}, have recently regained attention since they have been shown to match or even exceed to performance of traditional deep networks in domains such as sequence modeling \cite{bai2019deep}. At the same time, these models show drastically improved memory efficiency over traditional networks since backpropagation is typically done analytically using the implicit function theorem, without needing to store the intermediate hidden layers.

However, implict-depth models that perform well require extensive tuning in order to achieve stable convergence to a solution. Obtaining convergence in DEQs requires careful initialization and regularization, which has proven difficult in practice \cite{linsley2020stable}. Moreover, solutions to these models are not guaranteed to exist or be unique, making the output of the models potentially ill-defined. While Neural ODEs \cite{chen2018neural} do guarantee existence of a unique solution, training remains unstable since the ODE problems can become severely ill-posed \cite{dupont2019augmented}. Augmented Neural ODEs \cite{dupont2019augmented} improve the stability of Neural ODEs by learning ODEs with simpler flows, but neither model achieves efficient convergence nor performs well on standard benchmarks. Crucial questions remain about how models can have guaranteed, unique solutions, and what algorithms are most efficient at finding them.

In this paper, we present a new class of implicit-depth equilibrium model, the Monotone Operator Equilibrium Network (\mon{}), which guarantees stable convergence to a unique fixed point.\footnote{We largely use the terms ``fixed point'' and ``equilibrium point'' interchangably in this work, using fixed point in the context of an iterative procedure, and equilibrium point to refer more broadly to the point itself.} The model is based upon the theory of monotone operators \cite{bauschke2011convex,ryu2016primer}, and illustrates a close connection between simple fixed-point iteration in weight-tied networks and the solution to a particular form of monotone operator splitting problem. Using this connection, this paper lays the theoretical and practical foundations for such networks. We show how to parameterize networks in a manner that ensures all operators remain monotone, which establishes the existence and uniqueness of the equilibrium point. We show how to backpropagate through such networks using the implicit function theorem; this leads to a corresponding (linear) operator splitting problem for the backward pass, which also is guaranteed to have a unique solution.  We then adapt traditional operator splitting methods, such as forward-backward splitting or Peaceman-Rachford splitting, to naturally derive algorithms for efficiently computing these equilibrium points.

Finally, we demonstrate how to practically implement such models and operator splitting methods, in the cases of typical feedforward, fully convolutional, and multi-scale convolutional networks.  For convolutional networks, the most efficient fixed-point solution methods require an inversion of the associated linear operator, and we illustrate how to achieve this using the fast Fourier transform. The resulting networks show strong performance on several benchmark tasks, vastly improving upon the accuracy and efficiency of Neural ODEs-based models, the other implicit-depth models where solutions are guaranteed to exist and be unique.

\vspace*{-.3\baselineskip}
\section{Related work}

\paragraph{Implicit models in deep learning}

There has been a growing interest in recent
years in implicit layers in deep learning. Instead of specifying the explicit computation to perform, a layer specifies some \emph{condition} that
should hold at the solution to the layer, such as a nonlinear equality, or a differential equation solution. Using the implicit function theorem allows for backpropagating through the layer solutions \emph{analytically}, making these layers very memory efficient, as they do not need to maintain intermediate iterations of the solution procedure. Recent examples include layers that compute inference in graphical models \cite{johnson2016composing}, solve optimization problems \cite{gould2016differentiating,amos2017optnet,gould2019deep,agarwal2019differentiable}, execute model-based control policies \cite{amos2018differentiable}, solve two-player games \cite{ling2018game}, solve gradient-based optimization for meta-learning \cite{rajeswaran2019meta}, and many others.

\vspace*{-.3\baselineskip}
\paragraph{Stability of fixed-point models}
The issue of model stability has in fact been at the heart of much work in fixed-point models. The original work on attractor-style recurrent models, trained via recurrent backpropagation \cite{almeida1990learning,pineda1988generalization}, precisely attempted to ensure that the forward iteration procedure was stable.  And indeed, much of the work in recurrent architectures such as LSTMs has focused on these issues of stability \cite{hochreiter1997long}. Recent work has revisited recurrent backpropagation in a similar manner to DEQs, with the similar aim of speeding up the computation of fixed points \cite{liao2018reviving}. And other work has looked at the stability of implicit models \cite{el2019implicit}, with an emphasis on guaranteeing the existence of fixed points, but focused on alternative stability conditions, and considered only relatively small-scale experiments. Other recent work has looked to use control-theoretic methods to ensure the stability of implicit models, \cite{revay2019contracting}, though again they consider only small-scale evaluations.


\vspace*{-.5\baselineskip}
\paragraph{Monotone operators in deep learning}

Although most work in the field of monotone operators is concerned with general convex analysis, the recent work of \cite{combettes2020deep} does also highlight connections between deep networks and monotone operator problems.  Unlike our current work, however, that work focused largely on the fact that many common non-linearities can be expressed via proximal operators, and analyzed traditional networks under the assumptions that certain of the operators were monotone, but did not address conditions for the networks to be monotone or algorithms for solving or backpropagating through the networks.

\vspace*{-.2\baselineskip}
\section{A monotone operator view of fixed-point networks}

This section lays out our main methodological and theoretical contribution, a class of equilibrium networks based upon monotone operators. We begin with some preliminaries, then highlight the basic connection between the fixed point of an ``infinite-depth'' network and an associated operator splitting problem; next, we propose a parameterization that guarantees the associated operators to be maximal monotone; finally, we show how to use operator splitting methods to both compute the fixed point and backpropagate through the fixed point efficiently.
\vspace*{-0.5\baselineskip}
\subsection{Preliminaries}

\paragraph{Monotone operator theory} The theory of monotone operators plays a foundational role in convex analysis and optimization. Monotone operators are a natural generalization of monotone functions, which can be used to assess the convergence properties of many forms of iterative fixed-point algorithms. We emphasize that the majority of the work in this paper relies on well-known properties about monotone operators, and we refer to standard references on the topic including \cite{bauschke2011convex} and a less formal survey by \cite{ryu2016primer}; we do include a brief recap of the definitions and results we require in Appendix \ref{apx:monotone}. Formally, an operator is a subset of the space $F \subseteq \mathbb{R}^n \times \mathbb{R}^n$; in our setting this will usually correspond to set-valued or single-valued function. Operator splitting approaches refer to methods for finding a zero in a sum of operators, i.e., find $x$ such that $0 \in (F + G)(x)$. There are many such methods, but the two we will use mainly in this work are \emph{forward-backward} splitting (eqn. \ref{eqn:forward-backward} in the Appendix) and \emph{Peaceman-Rachford} splitting (eqn. \ref{eqn:peaceman-rachford}). As we will see, both finding a network equilibrium point and backpropagating through it can be formulated as operator splitting problems, and different operator splitting methods will lead to different approaches in their application to our subsequent implicit networks.

\vspace*{-.3\baselineskip}
\paragraph{Deep equilibrium models} The \mon{} architecture is closely relate to the DEQ model, which parameterizes a ``weight-tied, input-injected'' network of the form $z_{i+1} = g(z_i, x)$, where $x$ denotes the input to the network, injected at each layer; $z_i$ denotes the hidden layer at depth $i$; and $g$ denotes a nonlinear function which is the same for each layer (hence the network is weight-tied). The key aspect of the DEQ model is that in this weight-tied setting,
instead of forward iteration, we can simply use any root-finding approach to find an equilibrium point of such an iteration $z^* = g(z^*, x)$. Assuming the model is stable, this equilibrium point corresponds to an ``infinite-depth fixed point'' of the layer. The \mon{} architecture can be viewed as an instance of a DEQ model, but one that relies on the theory of monotone operators, and a specific paramterization of the network weights, to guarantee the existence of a unique fixed point for the network. Crucially, however, as is the case for DEQs, naive forward iteration of this model is \emph{not} necessarily stable; we therefore employ operator splitting methods to develop provably (linearly) convergent methods for finding such fixed points.

\subsection{Fixed-point networks as operator splitting}

As a starting point of our analysis, consider the weight-tied, input-injected network in which $x \in \mathbb{R}^d$ denotes the input, and $z^k \in \mathbb{R}^n$ denotes the hidden units at layer $k$, given by the iteration\footnote{This setting can also be seen as corresponding to a recurrent network with identical inputs at each time (indeed, this is the view of so-called attractor networks \cite{pineda1988generalization}). However, because in modern usage recurrent networks typically refer to sequential models with \emph{different} inputs at each time, we don't adopt this terminology.}
\begin{equation}
\label{eqn:forward-iteration}
z^{k+1} = \sigma(W z^k + U x + b)
\end{equation}
where $\sigma : \mathbb{R} \rightarrow \mathbb{R}$ is a nonlinearity applied elementwise, $W \in \mathbb{R}^{n \times n}$ are the hidden unit weights, $U \in \mathbb{R}^{n \times x}$ are the input-injection weights and $b \in \mathbb{R}^n$ is a bias term.  An equilibrium point, or fixed point, of this system is some point $z^\star$ which remains constant after an update:
\begin{equation}
\label{forward-iteration}
z^\star = \sigma(W z^\star + U x + b).
\end{equation}
We begin by observing that it is possible to characterize this equilibrium point exactly as the solution to a certain operator splitting problem, under certain choices of operators and activation $\sigma$. This can be formalized in the following theorem, which we prove in Appendix \ref{apx:proofs}:

\begin{theorem}
\label{thm:monotone-formulation}
Finding a fixed point of the iteration (\ref{eqn:forward-iteration}) is equivalent to finding a zero of the operator splitting problem $0 \in (F+G)(z^\star)$ with the operators
\begin{equation}
F(z) = (I - W)(z) - (Ux + b), \;\; G = \partial f
\end{equation}
and $\sigma(\cdot) = \prox^1_f(\cdot)$ for some convex closed proper (CCP) function $f$, where  $\prox^\alpha_f$  denotes the proximal operator
\begin{equation}
\prox_f^\alpha(x) \equiv \argmin_z \frac{1}{2}\|x - z\|_2^2 + \alpha f(z).
\end{equation}
\end{theorem}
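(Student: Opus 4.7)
The plan is to reduce the claim to the well-known subgradient characterization of the proximal operator: for any CCP function $f$ and any $v$, $u = \prox^\alpha_f(v)$ if and only if $v - u \in \alpha\,\partial f(u)$. This is the only non-trivial ingredient, and it is standard (see e.g.\ \cite{bauschke2011convex}); given it, everything else is algebra.

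First I would rewrite the fixed-point condition $z^\star = \sigma(Wz^\star + Ux + b)$ using $\sigma = \prox^1_f$, obtaining $z^\star = \prox^1_f(Wz^\star + Ux + b)$. Applying the proximal characterization with $u = z^\star$, $v = Wz^\star + Ux + b$, $\alpha = 1$, this is equivalent to
\begin{equation*}
(Wz^\star + Ux + b) - z^\star \in \partial f(z^\star),
\end{equation*}
i.e.\ $0 \in (I - W)z^\star - (Ux + b) + \partial f(z^\star)$. Recognizing the first two terms as $F(z^\star)$ and $\partial f = G$, this is exactly $0 \in (F + G)(z^\star)$. Since each implication uses the proximal characterization as an ``iff'', the two problems have precisely the same solution sets, establishing the equivalence.

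The only substantive point beyond this calculation is to justify that the relevant nonlinearities $\sigma$ actually arise as $\prox^1_f$ for some CCP $f$. By Moreau's theorem, a map $T : \mathbb{R}^n \to \mathbb{R}^n$ is the proximal operator of a CCP function at $\alpha = 1$ precisely when it is nonexpansive and is the subdifferential of some convex function; for elementwise activations this reduces to $\sigma$ being monotone nondecreasing and $1$-Lipschitz, which covers ReLU, sigmoid, tanh, and the other standard choices. I would include a brief remark (or defer to the appendix on monotone operators) giving an explicit $f$ for ReLU, e.g.\ $f(z) = \mathbb{I}\{z \ge 0\}$ so that $\prox^1_f$ is the componentwise projection onto the nonnegative orthant, which coincides with ReLU.

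The main ``obstacle'' is really just bookkeeping of signs and of the direction of the equivalence, together with being explicit about the existence of the underlying $f$; there is no analytic difficulty, since the entire content of the theorem is packaged inside the standard prox/subgradient identity. The theorem is therefore best viewed as a dictionary between the DEQ iteration and an operator splitting formulation, rather than as a deep result — its importance lies in enabling the subsequent monotonicity and convergence analysis.
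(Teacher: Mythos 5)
Your proof is correct, but it takes a different route from the paper's. The paper proves the theorem by observing that the forward-backward splitting iteration $z^{k+1} = R_G(z^k - \alpha F(z^k))$ applied to these operators with $\alpha = 1$ is \emph{literally} the network iteration $z^{k+1} = \prox_f^1(Wz^k + Ux + b)$, so the two problems share fixed points. You instead apply the subgradient characterization $u = \prox_f^\alpha(v) \Leftrightarrow v - u \in \alpha\,\partial f(u)$ directly to the fixed-point equation, obtaining $0 \in (I-W)z^\star - (Ux+b) + \partial f(z^\star)$ in one line. Your argument is arguably the more self-contained proof of the stated equivalence: the paper's identification of the iteration with forward-backward splitting still tacitly relies on the fact that fixed points of that iteration coincide with zeros of $F+G$, which is exactly the prox/subgradient identity you invoke explicitly. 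What the paper's route buys in exchange is that it simultaneously establishes the algorithmic fact exploited in Section 3.4 — that naive forward iteration of the network \emph{is} forward-backward splitting, so damping it and constraining $\alpha$ yields a provably convergent solver. Your closing remarks on Moreau's theorem and the ReLU example are consistent with the paper's discussion of which nonlinearities arise as proximal operators; they are not needed for the equivalence itself, which (as you note, and as the paper's own one-line proof confirms) is a change of notation once the prox identity is in hand.
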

\vspace*{-.6\baselineskip}

It is also well-established that many common nonlinearities used in deep networks can be represented as proximal operators of CCP functions \cite{bibi2018deep,combettes2020deep}.  For example, the ReLU nonlinearity $\sigma(x) = [x]_+$ is the proximal operator of the indicator of the positive orthant $f(x) = I\{x \geq 0\}$, and tanh, sigmoid, and softplus all have close correspondence with proximal operators of simple expressions \cite{bibi2018deep}.

In fact, this method establishes that some seemingly unstable iterations can actually still lead to convergent algorithms. ReLU activations, for instance, have traditionally been avoided in iterative models such as recurrent networks, due to exploding or vanishing gradient problems and nonsmoothness.  Yet this iteration shows that (with input injection and the above constraint on $W$), ReLU operators are perfectly well-suited to these fixed-point iterations.

\vspace*{-.2\baselineskip}
\subsection{Enforcing existence of a unique solution}

The above connection is straightforward, but also carries interesting implications for deep learning.  Specifically, we can establish the existence and uniqueness of the equilibirum point $z^\star$ via the simple sufficient criterion that $I-W$ is strongly monotone, or in other words\footnote{For non-symmetric matrices, which of course is typically the case with $W$, positive definiteness is defined as the positive definiteness of the symmetric component $I - W \succeq mI \Leftrightarrow I - (W + W^T)/2 \succeq mI$.} $I - W \succeq mI$
for some $m > 0$ (see Appendix \ref{apx:monotone}). The constraint is by no means a trivial condition. Although many layers obey this condition under typical initialization schemes, during training it is normal for $W$ to move outside this regime.  Thus, the first step of the \mon{} architecture is to parameterize $W$ in such a way that it always satisfies this strong monotonicity constraint.

\begin{proposition}
We have $I - W \succeq mI$ if and only if there exist $A,B \in \mathbb{R}^{n \times n}$ such that
\begin{equation}
W = (1-m)I - A^T A + B - B^T.
\label{eqn:monotone-param}
\end{equation}
\label{prop:semidef}
\end{proposition}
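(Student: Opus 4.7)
The plan is to establish both directions of the equivalence by working with the symmetric/skew-symmetric decomposition of $W$. Throughout, I interpret $I - W \succeq mI$ via the convention from the footnote, i.e., $I - (W + W^T)/2 \succeq mI$.

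For the sufficiency direction ($\Leftarrow$), suppose $W = (1-m)I - A^T A + B - B^T$ for some $A, B$. Since $A^T A$ is symmetric and $B - B^T$ is skew-symmetric, the symmetric part of $W$ is $(W+W^T)/2 = (1-m)I - A^T A$. Therefore
\begin{equation}
I - (W + W^T)/2 = mI + A^T A \succeq mI,
\end{equation}
because $A^T A$ is positive semidefinite. This is exactly the strong monotonicity of $I - W$ with constant $m$.

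For the necessity direction ($\Rightarrow$), decompose $W = S + K$ into its symmetric and skew-symmetric parts, $S = (W+W^T)/2$ and $K = (W-W^T)/2$. The hypothesis $I - W \succeq mI$ reads $I - S \succeq mI$, equivalently $(1-m)I - S \succeq 0$. Since $(1-m)I - S$ is symmetric positive semidefinite, there exists a matrix $A$ (for instance, the symmetric square root, or a Cholesky factor) such that $A^T A = (1-m)I - S$. Taking $B = K/2$, the matrix $B - B^T = K/2 - (-K/2) = K$ reproduces the skew part. Combining,
\begin{equation}
(1-m)I - A^T A + B - B^T = S + K = W,
\end{equation}
which exhibits $W$ in the required form.

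I do not expect a real obstacle: the proof reduces to the standard symmetric/skew decomposition plus the observation that every positive semidefinite matrix admits a factorization $A^T A$. The one step worth stating carefully is the translation of the footnote's non-symmetric definition of positive definiteness into the condition $I - (W+W^T)/2 \succeq mI$, so that the $B - B^T$ term contributes nothing to the monotonicity check and all of the quadratic content is carried by $-A^T A$.
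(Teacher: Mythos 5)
Your proof is correct and follows essentially the same route as the paper's: the forward direction reads off the symmetric part $(1-m)I - A^TA$ and notes $I - (W+W^T)/2 = mI + A^TA \succeq mI$, and the converse factors the positive semidefinite matrix $(1-m)I - (W+W^T)/2$ as $A^TA$ and recovers the skew part via $B - B^T$. Your version is if anything slightly more explicit than the paper's in naming the witnesses $A$ (a square root or Cholesky factor) and $B = (W - W^T)/4$ --- no gaps.
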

\vspace*{-1\baselineskip}

We therefore propose to simply parameterize $W$ directly in this form, by defining the $A$ and $B$ matrices directly.  While this is an overparameterized form for a dense matrix, we could avoid this issue by, e.g. constraining $A$ to be lower triangular (making it the Cholesky factor of $A^T A$), and by making $B$ strictly upper triangular; in practice, however, simply using general $A$ and $B$ matrices has little impact upon the performance of the method.  The parameterization does notably raise additional complications when dealing with convolutional layers, but we defer this discussion to Section \ref{subsec:conv}.

\subsection{Computing the network fixed point}
\label{subsec:fixed_pt}
Given the \mon{} formulation, the first natural question to ask is: how should we compute the equilibrium point $z^\star = \sigma (Wz^\star + Ux +b)$?  Crucially, it can be the case that the simple forward iteration of the network equation (\ref{eqn:forward-iteration}) does \emph{not} converge, i.e., the iteration may be unstable. Fortunately, monotone operator splitting leads to a number of iterative methods for finding these fixed points, which are guaranteed to converge under proper conditions.  For example, the forward-backward iteration applied to the monotone operator formulation from Theorem \ref{thm:monotone-formulation} results exactly in a damped version of the forward iteration
\begin{equation}
z^{k+1} = \prox_{f}^{\alpha}(z^k - \alpha((I-W)z^k - (U x + b))) = \prox_{f}^{\alpha}((1-\alpha)z^k + \alpha(Wz^k + Ux + b)).
\end{equation}
This iteration is guaranteed to converge linearly to the fixed point $z^\star$ provided that $\alpha \leq 2m/L^2$, when the operator $I-W$ is Lipschitz and strongly monotone with parameters $L$ (which is simply the operator norm $\|I-W\|_2$) and $m$ \cite{ryu2016primer}.

A key advantage of the \mon{} formulation is the flexibility to employ alternative operator splitting methods that converge much more quickly to the equilibrium.  One such example is Peaceman-Rachford splitting which, when applied to the formulation from Theorem \ref{thm:monotone-formulation}, takes the form
\begin{equation}
\begin{split}
u^{k+1/2} & = 2 z^k - u^k \\
z^{k+1/2} & = (I + \alpha(I-W))^{-1}(u^{k+1/2} - \alpha (Ux +b)) \\
u^{k+1} & = 2z^{k+1/2} - u^{k+1/2} \\
z^{k+1} & = \prox_f^\alpha (u^{k+1}) \\
\end{split}
\end{equation}
where we use the explicit form of the resolvents for the two monotone operators of the model. The advantage of Peaceman-Rachford splitting over forward-backward is two-fold: 1) it typically converges in fewer iterations, which is a key bottleneck for many implicit models; and 2) it converges for any $\alpha > 0$ \cite{ryu2016primer}, unlike forward-backward splitting which is dependent on the Lipschitz constant of $I-W$.  The disadvantage of Peaceman-Rachford splitting, however, is that it requires an inverse involving the weight matrix $W$. It is not immediately clear how to apply such methods if the $W$ matrix involves convolutions or multi-layer models; we discuss these points in Section \ref{subsec:conv}. A summary of these methods for computation of the forward equilibrium point is given in Algorithms \ref{alg:FB-forward} and \ref{alg:PR-forward}.

\begin{figure*}[t]
\begin{minipage}[t]{.49\textwidth}
    \begin{algorithm}[H]
    \footnotesize
    \caption{Forward-backward equilibrium solving}
    \label{alg:FB-forward}
    \begin{algorithmic}[0]
    \State $z := 0$;~~~~~$\text{err} := 1$
    \While{$\text{err} > \epsilon$}
        \State $z^+ := (1-\alpha)z + \alpha (Wz + Ux + b)$
        \State $z^+ := \prox_{f}^{\alpha}(z^+)$
        \State $\text{err} := \frac{\|z^+-z\|_2}{\|z^+\|_2}$
        \State $z := z^+$
    \EndWhile
    \State \textbf{return} $z$
    \end{algorithmic}
    \end{algorithm}
\end{minipage}
\hfill
\begin{minipage}[t]{.49\textwidth}
    \begin{algorithm}[H]
    \footnotesize
    \caption{Peaceman-Rachford equilibrium solving}
    \label{alg:PR-forward}
    \begin{algorithmic}[0]
    \State $z, u := 0$;~~~~~$\text{err} := 1$;~~~~~$V := (I + \alpha(I-W))^{-1}$
    \While{$\text{err} > \epsilon$}
        \State $u^{1/2} := 2z - u$
        \State $z^{1/2} := V(u^{1/2} + \alpha(Ux + b))$
        \State $u^+ := 2z^{1/2} - u^{1/2}$
        \State $z^+ := \prox_{f}^{\alpha}(u^+)$
        \State $\text{err} := \frac{\|z^+-z\|_2}{\|z^+\|_2}$
        \State $z,u := z^+, u^+$
    \EndWhile
    \State \textbf{return} $z$
    \end{algorithmic}
    \end{algorithm}
\end{minipage}
\end{figure*}

\subsection{Backprogation through the monotone operator layer}
\label{subsec:backprop}
Finally, a key challenge for any implicit model is to determine how to perform backpropagation through the layer.  As with most implicit models, a potential benefit of the fixed-point conditions we describe is that, by using the implicit function theorem, it is possible to perform backpropagation without storing the intermediate iterates of the operator splitting algorithm in memory, and instead backpropagating directly through the equilibrium point.

To begin, we present a standard approach to differentiating through the fixed point $z^\star$ using the implicit function theorem. This formulation has some compelling properties for \mon{}, namely the fact that this (sub)gradient will always exist.  When training a network via gradient descent, we need to compute the gradients of the loss function
\begin{equation}
\frac{\partial \ell}{\partial (\cdot) } = \frac{\partial \ell}{\partial z^\star} \frac{\partial z^\star}{\partial (\cdot)}
\end{equation}
where $(\cdot)$ denotes some input to the layer or parameters, i.e. $W$, $x$, etc.  The challenge here is computing (or left-multiplying by) the Jacobian $\partial z^\star / \partial (\cdot)$, since $z^\star$ is not an explicit function of the inputs.  While it would be possible to simply compute gradients through the ``unrolled'' updates, e.g. $z^{k+1} = \sigma(W z^k + Ux + b)$ for forward iteration, this would require storing each intermediate state $z^{k}$, a potentially memory-intensive operation.  Instead, the following theorem gives an explicit formula for the necessary (sub)gradients. We state the theorem more directly in terms of the operators mentioned Theorem \ref{thm:monotone-formulation}; that is, we use $\prox_f^1(\cdot)$ in place of $\sigma(\cdot)$.

\begin{theorem}
\label{thm:backward}
For the equilibrium point $z^\star = \prox_f^1(Wz^\star + Ux + b)$, we have
\begin{equation}
\frac{\partial \ell}{\partial (\cdot) } =
\frac{\partial \ell}{\partial z^\star} (I - J W)^{-1} J \frac{\partial (Wz^\star + U x + b)}{\partial (\cdot)}
\end{equation}
where
\begin{equation}
\label{eqn:J}
J = \mathsf{D} \prox_f^1 (W z^\star + U x + b)
\end{equation}
denotes the Clarke generalized Jacobian of the nonlinearity evaluated at the point $W z^\star + U x + b$.  Furthermore, for the case that $(I - W) \succeq mI$, this derivative always exists.
\end{theorem}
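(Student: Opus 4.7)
My plan is to apply the non-smooth implicit function theorem to the defining equation $z^\star = \prox_f^1(Wz^\star + Ux + b)$, deriving the stated formula by a formal chain rule; the substantive work is then to verify that $(I-JW)^{-1}$ always exists, which I would do by combining firm nonexpansivity of the proximal operator with the strong monotonicity of $I-W$ given in the hypothesis.

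For the chain-rule step, let $h = Wz^\star + Ux + b$ and fix any $J \in \mathsf{D}\prox_f^1(h)$. Differentiating $z^\star = \prox_f^1(h)$ with respect to any target $(\cdot)$ would yield
\[
\frac{\partial z^\star}{\partial(\cdot)} = J\left[W\,\frac{\partial z^\star}{\partial(\cdot)} + \frac{\partial(Wz^\star + Ux + b)}{\partial(\cdot)}\right];
\]
collecting gives $(I-JW)\,\partial z^\star/\partial(\cdot) = J\,\partial(Wz^\star+Ux+b)/\partial(\cdot)$, and left-multiplying by $\partial \ell/\partial z^\star$ and $(I-JW)^{-1}$ produces the claimed expression. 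The non-smooth IFT rigorously justifies this step as soon as the element $I-JW$ is invertible.

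The main step is the existence of $(I-JW)^{-1}$. I would use two structural facts. First, since $f$ is CCP, $\prox_f^1 = I - \nabla M_f$ where $M_f$ is the Moreau envelope, a convex function with $1$-Lipschitz gradient, so at points of differentiability the Jacobian of $\prox_f^1$ is symmetric with spectrum in $[0,1]$; these properties survive the limits and convex hulls defining $\mathsf{D}\prox_f^1$, so every $J \in \mathsf{D}\prox_f^1$ satisfies $0 \preceq J \preceq I$. Second, $(I-W)\succeq mI$ is equivalent to $\tfrac{W+W^T}{2} \preceq (1-m)I$. Suppose for contradiction $(I-JW)v = 0$ with $v \neq 0$; then $v = JWv \in \mathrm{range}(J) = \mathrm{range}(J^{1/2})$, so I write $v = J^{1/2}u$ with $u \in \mathrm{range}(J^{1/2})$. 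Substituting and using the injectivity of $J^{1/2}$ on its own range gives $u = J^{1/2}WJ^{1/2}u$; pairing with $u$ and symmetrizing,
\[
\|u\|^2 = u^T J^{1/2}\tfrac{W+W^T}{2} J^{1/2} u \;\le\; (1-m)\,u^T J u \;\le\; (1-m)\|u\|^2,
\]
which contradicts $m>0$ and $u \neq 0$. Hence $I-JW$ is invertible for every selection $J \in \mathsf{D}\prox_f^1(h)$.

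The hard part is exactly this invertibility: because $J$ can be singular (e.g.\ diagonal with zero entries when $\sigma$ is ReLU), one cannot simply multiply through by $J^{-1}$ to reduce to the obvious estimate; the range-space reduction $v = J^{1/2}u$ is the key manoeuvre that lets the strong-monotonicity inequality on $\tfrac{W+W^T}{2}$ bite through the ``sandwich'' $J^{1/2}(\cdot)J^{1/2}$, even when $J$ has a nontrivial kernel.
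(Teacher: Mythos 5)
Your proof is correct, and the chain-rule half is identical to the paper's. Where you diverge is the invertibility of $I-JW$, and your route is genuinely different in mechanics even though both arguments hinge on sandwiching $W$ between copies of $J^{1/2}$. The paper argues spectrally: it uses the similarity transform $\lambda(I-JW)=\lambda(J^{-1/2}(I-JW)J^{1/2})$ to get $\lambda(I-JW)=\lambda(I-J^{1/2}WJ^{1/2})$, extends this identity to singular $J$ by a continuity-of-eigenvalues limit, and then bounds the real parts of the eigenvalues below by writing $I-J^{1/2}WJ^{1/2}=(I-J)+J^{1/2}(I-W)J^{1/2}$ and invoking positive definiteness of the symmetric part. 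You instead run a direct null-vector contradiction: from $(I-JW)v=0$ you deduce $v\in\mathrm{range}(J)$, reduce to $u=J^{1/2}WJ^{1/2}u$ on $\mathrm{range}(J^{1/2})$, and kill $u$ with the quadratic-form estimate. What your version buys is that singular $J$ (the ReLU case) is handled head-on rather than by a limiting argument, and it works verbatim for any symmetric $0\preceq J\preceq I$ rather than only diagonal $J$; your Moreau-envelope justification of $0\preceq J\preceq I$ for every element of the Clarke Jacobian is also more careful than the paper's bare appeal to monotonicity and nonexpansiveness. What the paper's version buys is slightly less bookkeeping about ranges and kernels. One cosmetic point: the final step of your displayed chain, $(1-m)\,u^TJu\le(1-m)\|u\|^2$, uses $J\preceq I$ and implicitly assumes $m\le 1$; for $m>1$ the first inequality alone already gives $\|u\|^2\le(1-m)\,u^TJu\le 0$, so the conclusion survives, but you should split the cases or just note that $(1-m)u^TJu\le\max(1-m,0)\|u\|^2<\|u\|^2$.
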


To apply the theorem in practice to perform reverse-mode differentiation, we need to solve the system
\begin{equation}
(I - J W)^{-T} \left(\frac{\partial \ell}{\partial z^\star}\right )^T.
\label{eqn:gradient}
\end{equation}
The above system is a linear equation and while it is typically computationally infeasible to compute the inverse $(I - J W)^{-T}$ exactly, we could compute a solution to $(I - J W)^{-T}v$ using, e.g., conjugate gradient methods. However, we present an alternative formulation to computing \eqref{eqn:gradient} as the solution to a (linear) monotone operator splitting problem:

\vspace*{.4\baselineskip}
\begin{theorem}
\label{thm:backward-monotone}
Let $z^\star$ be a solution to the monotone operator splitting problem defined in Theorem \ref{thm:monotone-formulation}, and define $J$ as in
(\ref{eqn:J}). Then for $v \in \mathbb{R}^n$ the solution of the equation
\begin{equation}
u^\star = (I - J W)^{-T}v
\end{equation}
is given by
\begin{equation}
u^\star = v + W^T \tilde{u}^\star
\end{equation}
where $\tilde{u}^\star$ is a zero of the operator splitting problem $0 \in (\tilde{F}+\tilde{G})(u^\star)$, with operators defined as
\begin{equation}
\tilde{F}(\tilde{u}) = (I - W^T)(\tilde{u}) , \;\; \tilde{G}(\tilde{u}) = D \tilde{u} - v
\label{eqn:splitting_bw}
\end{equation}
where $D$ is a diagonal matrix defined by $J = (I+D)^{-1}$ (where we allow for the possibility of $D_{ii} = \infty$ for $J_{ii} = 0$).
\end{theorem}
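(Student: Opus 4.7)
The plan is to rewrite the linear equation $(I - JW)^T u^\star = v$ in a ``primal--dual'' form by introducing the auxiliary variable $\tilde{u}^\star = J u^\star$, and then to verify that this auxiliary variable satisfies the monotone inclusion $0 \in (\tilde{F}+\tilde{G})(\tilde{u}^\star)$. The conversion of $J$ into the resolvent form $J = (I+D)^{-1}$ is what makes the splitting natural: since $\sigma=\prox_f^1$ is applied coordinatewise, $f$ is separable and each scalar proximal operator $\sigma_i = (I+\partial f_i)^{-1}$ is firmly nonexpansive, so its Clarke Jacobian lies in $[0,1]$. Consequently $J$ is diagonal, and one may write $J_{ii} = 1/(1+d_i)$ with $d_i \in [0,\infty]$, yielding the diagonal matrix $D$ in the theorem statement.

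Having set this up, I would first rewrite the defining equation. Since $J$ is symmetric, $u^\star = (I-JW)^{-T}v$ is equivalent to $(I - W^T J)u^\star = v$, i.e.
\begin{equation}
u^\star - W^T (Ju^\star) = v.
\end{equation}
Setting $\tilde{u}^\star := Ju^\star$ immediately gives the claimed recovery formula $u^\star = v + W^T\tilde{u}^\star$. Next I would verify the inclusion. In the finite case ($d_i<\infty$ for all $i$), $\tilde{u}^\star = (I+D)^{-1}u^\star$, so $(I+D)\tilde{u}^\star = u^\star$, and
\begin{equation}
\tilde{F}(\tilde{u}^\star)+\tilde{G}(\tilde{u}^\star) = (I-W^T)\tilde{u}^\star + D\tilde{u}^\star - v = (I+D)\tilde{u}^\star - W^T\tilde{u}^\star - v = u^\star - W^T\tilde{u}^\star - v = 0,
\end{equation}
using the displayed equation above.

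The main obstacle, and the step that requires the most care, is handling coordinates with $D_{ii}=\infty$ (equivalently, $J_{ii}=0$, which occurs, e.g., on the inactive side of a ReLU). Here the scalar map $d \mapsto d\cdot \tilde{u}_i - v_i$ must be interpreted as a set-valued maximal monotone operator whose effective domain is $\{\tilde{u}_i=0\}$; concretely, one takes $\tilde{G}_i(\tilde{u}_i) = N_{\{0\}}(\tilde{u}_i) - v_i$, which equals all of $\mathbb{R}$ at $\tilde{u}_i=0$ and is empty otherwise. Then the inclusion $0 \in (\tilde{F}+\tilde{G})(\tilde{u}^\star)$ forces $\tilde{u}_i^\star = 0$, which is consistent with $\tilde{u}_i^\star = J_{ii} u_i^\star = 0\cdot u_i^\star$, and in this coordinate one simply picks the element of the normal cone equal to $v_i - (\tilde{F}\tilde{u}^\star)_i$ to realize $0\in(\tilde{F}+\tilde{G})(\tilde{u}^\star)$.

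Finally, I would note that both operators are maximal monotone (so the splitting is well-posed): $\tilde{F}$ inherits strong monotonicity with the same parameter $m$ from the fact that $I-W^T$ and $I-W$ share the same symmetric part, and $\tilde{G}$ is a diagonal monotone operator with nonnegative ``slopes'' (extended by normal cones where $d_i=\infty$). Uniqueness of $\tilde{u}^\star$ then follows from strong monotonicity of $\tilde{F}+\tilde{G}$, which in turn guarantees that the reconstructed $u^\star = v + W^T\tilde{u}^\star$ is the unique solution to the original linear system, completing the proof.
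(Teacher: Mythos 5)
Your proof is correct, and it reaches the theorem by a cleaner route than the paper's. The paper derives the splitting problem by substituting $\tilde{u} = W^{-T}u$ and pushing the equation through a chain of equivalences that each multiply by $W^{-T}$ or $(I+D)$; this leaves an unwanted $(I+D)W^{-T}v$ on the right-hand side, which the paper then removes with the auxiliary identity $(I-JW)^{-T} = I + (I - W^T J)^{-1} W^T J$ (and it explicitly flags, in a footnote, that the intermediate appearance of $W^{-T}$ is numerically and conceptually undesirable). You instead set $\tilde{u}^\star = J u^\star$ and verify directly that $(I+D)\tilde{u}^\star = u^\star$ turns the linear system $(I - W^T J)u^\star = v$ into the inclusion $0 \in (\tilde{F}+\tilde{G})(\tilde{u}^\star)$, with the recovery formula $u^\star = v + W^T\tilde{u}^\star$ falling out immediately. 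The two substitutions in fact define the same $\tilde{u}^\star$ (the paper's $W^{-T}(u^\star - v) = W^{-T}W^T J u^\star = Ju^\star$), but yours never requires $W$ to be invertible and so avoids the issue the paper has to patch. Your treatment of the $D_{ii}=\infty$ coordinates via the normal cone $N_{\{0\}}$ is also somewhat more careful than the paper's limiting argument, and your closing observation that strong monotonicity of $\tilde{F}$ gives uniqueness of the zero is exactly what is needed to conclude that the constructed $\tilde{u}^\star$ is \emph{the} zero of the splitting problem, so that the recovery formula characterizes the solution. No gaps.
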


An advantage of this approach when using Peaceman-Rachford splitting is that it allows us to reuse a fast method for multiplying by $(I + \alpha(I-W))^{-1}$ which is required by Peaceman-Rachford during both the forward pass (equilibrium solving) and backward pass (backpropagation) of training a \mon{}. Algorithms detailing both the Peaceman-Rachford and forward-backward solvers for the backpropagation problem \eqref{eqn:splitting_bw} are given in Algorithms \ref{alg:FB-backward} and \ref{alg:PR-backward}.

\begin{figure*}[t]
\begin{minipage}[t]{.4\textwidth}
    \begin{algorithm}[H]
    \small
    \caption{Forward-backward equilibrium backpropagation}
    \label{alg:FB-backward}
    \begin{algorithmic}[0]
    \State $u := 0$;~~~~~$\text{err} := 1$;~~~~~$v := \frac{\partial \ell}{\partial z^*}$
    \While{$\text{err} > \epsilon$}
        \State $u^+ := (1-\alpha)u + \alpha W^T u$
        \State $u^+_i :=
            \begin{cases} \frac{u^+_i + \alpha v_i}{1 + \alpha(1+D_{ii})} &\mbox{if } D_{ii} < \infty \\
                            0 & \mbox{if } D_{ii} = \infty
            \end{cases}$
        \State $\text{err} := \frac{\|u^+-u\|_2}{\|u^+\|_2}$
        \State $u := u^+$
    \EndWhile
    \State \textbf{return} $u$
    \end{algorithmic}
    \end{algorithm}
\end{minipage}
\hfill
\begin{minipage}[t]{.58\textwidth}
    \begin{algorithm}[H]
    \small
    \caption{Peaceman-Rachford equilibrium backpropagation}
    \label{alg:PR-backward}
    \begin{algorithmic}[0]
    \State $z, u := 0$;~~~~~$\text{err} := 1$;~~~~~$v := \frac{\partial \ell}{\partial z^*}$;~~~~~$V := (I + \alpha(I-W))^{-1}$
    \While{$\text{err} > \epsilon$}
        \State $u^{1/2} := 2z - u$
        \State $z^{1/2} := V^T u^{1/2}$
        \State $u^+ := 2z^{1/2} - u^{1/2}$
        \State $z^+_i :=
            \begin{cases} \frac{u^+_i + \alpha v_i}{1 + \alpha(1+D_{ii})} &\mbox{if } D_{ii} < \infty \\
                            0 & \mbox{if } D_{ii} = \infty
            \end{cases}$
        \State $\text{err} := \frac{\|z^+-z\|_2}{\|z^+\|_2}$
        \State $z,u := z^+, u^+$
    \EndWhile
    \State \textbf{return} $z$
    \end{algorithmic}
    \end{algorithm}
\end{minipage}
\end{figure*}
\vspace*{-.3\baselineskip}
\section{Example monotone operator networks}
\label{sec:example_mons}
With the basic foundations from the previous section, we now highlight several different instantiations of the \mon{} architecture. In each of these settings, as in Theorem \ref{thm:monotone-formulation}, we will formulate the objective as one of finding a solution to the operator splitting problem $0 \in (F+G)(z^\star)$ for
\begin{equation}
F(z) = (I - W)(z) - (Ux + b), \;\; G = \partial f
\end{equation}
or equivalently as computing an equilibrium point $z^\star = \prox_f^1(Wz^\star + U x + b)$.

In each of these settings we need to define what the input and hidden state $x$ and $z$ correspond to, what the $W$ and $U$ operators consist of, and what is the function $f$ which determines the network nonlinearity. Key to the application of monotone operator methods are that 1) we need to constrain the $W$ matrix such that $I - W \succeq mI$ as described in the previous section and 2) we need a method to compute (or solve) the inverse $(I + \alpha(I - W))^{-1}$, needed e.g. for Peaceman-Rachford; while this would not be needed if using only forward-backward splitting, we believe that the full power of the monotone operator view is realized precisely when these more involved methods are possible.
\vspace*{-.2\baselineskip}
\subsection{Fully connected networks}

The simplest setting, of course, is the case we have largely highlighted above, where $x \in \mathbb{R}^d$ and $z \in \mathbb{R}^n$ are unstructured vectors, and $W \in \mathbb{R}^{n \times n}$ and $U \in \mathbb{R}^{n \times d}$ and $b \in \mathbb{R}^n$ are dense matrices and vectors respectively.  As indicated above, we parameterize $W$ directly by $A,B \in \mathbb{R}^{n \times n}$ as in \eqref{eqn:monotone-param}.
Since the $Ux$ term simply acts as a bias in the iteration, there is no constraint on the form of $U$.

We can form an inverse directly by simply forming and inverting the matrix $I + \alpha(I-W)$, which has cost $O(n^3)$.  Note that this inverse needs to be formed only once, and can be reused over all iterations of the operator splitting method and over an entire batch of examples (but recomputed, of course, when $W$ changes).  Any proximal function can be used as the activation: for example the ReLU, though as mentioned there are also close approximations to the sigmoid, tanh, and softplus.
\vspace*{-.2\baselineskip}
\subsection{Convolutional networks}
\label{subsec:conv}
The real power of \mon{}s comes with the ability to use more structured linear operators such as convolutions.  We let $x \in \mathbb{R}^{ds^2}$ be a $d$-channel input of size $s \times s$ and $z \in \mathbb{R}^{ns^2}$ be a $n$-channel hidden layer. We also let $W \in \mathbb{R}^{ns^2 \times ns^2}$ denote the linear form of a 2D convolutional operator and similarly for $U \in \mathbb{R}^{ns^2 \times ds^2}$.  As above, $W$ is parameterized by two additional convolutional operators $A,B$ of the same form as $W$.  Note that this implicitly increases the receptive field size of $W$: if $A$ and $B$ are $3\times 3$ convolutions, then $W = (1-m)I - A^T A + B - B^T$ will have an effective kernel size of 5.

\paragraph{Inversion} The benefit of convolutional operators in this setting is the ability to perform efficient inversion via the fast Fourier transform. Specifically, in the case that $A$ and $B$ represent circular convolutions, we can reduce the matrices to block-diagonal form via the discrete Fourier transform (DFT) matrix
\begin{equation}
A = F_s D_A F_s^*
\end{equation}
where $F_s$ denotes (a permuted form of) the 2D DFT operator and $D_A \in \mathbb{C}^{ns^2 \times n s^2}$ is a (complex) block diagonal matrix where each block ${D_A}_{ii} \in \mathbb{C}^{n \times n}$ corresponds to the DFT at one particular location in the image. In this form, we can efficiently multiply by the inverse of the convolutional operator, noting that
\begin{equation}
\label{FFT-diagonalization}
\begin{split}
I + \alpha(I-W) & = (1+\alpha m)I + \alpha A^T A - \alpha B + \alpha B^T \\
& = F_s ((1+\alpha m)I + \alpha D_A^* D_A - D_B + D_B^*) F_s^*.
\end{split}
\end{equation}
The inner term here is itself a block diagonal matrix with complex $n \times n$ blocks (each block is also guaranteed to be invertible by the same logic as for the full matrix).  Thus, we can multiply a set of hidden units $z$ by the inverse of this matrix by simply inverting each $n \times n$ block, taking the fast Fourier transform (FFT) of $z$, multiplying each corresponding block of $F_s z$ by the corresponding inverse, then taking the inverse FFT. The details are given in Appendix \ref{apx:conv-mons}.

The computational cost of multiplying by this inverse is $O(n^2 s^2 \log s + n^3 s^2)$ to compute the FFT of each convolutional filter and precompute the inverses, and then $O(b n s^2 \log s + b n^2 s^2)$ to multiply by the inverses for a set of hidden units with a minibatch of size $b$.  Note that just computing the convolutions in a normal manner has cost $O(b n^2 s^2)$, so that these computations are on the same order as performing typical forward passes through a network, though empirically 2-3 times slower owing to the relative complexity of performing the necessary FFTs.

One drawback of using the FFT in this manner is that it requires that all convolutions be circular; however, this circular dependence can be avoided using zero-padding, as detailed in Section \ref{apx:zero-padding}.

\vspace*{-.2\baselineskip}
\subsection{Forward multi-tier networks}
\label{subsec:multi-tier}
Although a single fully-connected or convolutional operator within a \mon{} can be suitable for small-scale problems, in typical deep learning settings it is common to model hidden units at different hierarchical levels.  While \mon{}s may seem inherently ``single-layer,'' we can model this same hierarchical structure by incorporating structure into the $W$ matrix.  For example, assuming a convolutional setting, with input $x \in \mathbb{R}^{ds^2}$ as in the previous section, we could partition $z$ into $L$ different hierarchical resolutions and let $W$ have a multi-tier structure, e.g.
\begin{equation*}
z = \left [ \begin{array}{c} z_1 \in \mathbb{R}^{n_1 s_1^2} \\
z_2 \in \mathbb{R}^{n_2 s_2^2} \\
\vdots \\
z_L \in \mathbb{R}^{n_L s_L^2}
\end{array} \right ],
~~~~~~~W = \left [ \begin{array}{ccccc}
W_{11} & 0 & 0 & \cdots & 0  \\
W_{21} & W_{22} & 0 & \cdots & 0 \\
0 & W_{32} & W_{33} & \cdots & 0  \\
\vdots & \vdots & \vdots & \ddots & \vdots \\
0 & 0 & 0 & \cdots & W_{LL} \end{array} \right ]
\end{equation*}
where $z_i$ denotes the hidden units at level $i$, an $s_i \times s_i$ resolution hidden unit with $n_i$ channels, and where $W_{ii}$ denotes an $n_i$ channel to $n_i$ channel convolution, and $W_{i+1,i}$ denotes an $n_i$ to $n_{i+1}$ channel, \emph{strided} convolution. This structure of $W$ allows for both inter- and intra-tier influence.

\begin{figure}[!t]
\begin{minipage}{.48\textwidth}
	\centering
	\small
	\makeatletter
	\setlength{\tabcolsep}{4pt}
	\begin{tabular}{l c r}
		\multicolumn{3}{c}{\textbf{CIFAR-10}} \\ \midrule
		\textbf{Method}      & \textbf{Model size} & \textbf{Acc.}\\  \midrule
		Neural ODE& 172K & 55.3$\pm$0.3\%         \\
		Aug. Neural ODE& 172K & 58.9$\pm$2.8\%      \\
		Neural ODE$^{\dagger*}$& 1M & 59.9\%         \\
		Aug. Neural ODE$^{\dagger*}$& 1M & 73.4\%      \\
		\midrule
		\mon{} (ours) & & \\\midrule
		\hspace{8pt} Single conv & 172K & \textbf{74.0$\pm$0.1\%}              \\
		\hspace{8pt} Multi-tier & 170K & 72.0$\pm$0.3\%        \\
		\hspace{8pt} Single conv$^*$& 854K & 82.0$\pm$0.3\%  \\
		\hspace{8pt} Multi-tier$^*$& 1M & \textbf{89.0$\pm$0.3\%}  \\  
		[0.05in] \multicolumn{3}{c}{\textbf{SVHN}}\\ \midrule
		\textbf{Method}      & \textbf{Model size} & \textbf{Acc.}\\  \midrule
		Neural ODE$^{\ddagger}$& 172K & 81.0\% \\
		Aug. Neural ODE$^{\ddagger}$& 172K & 83.5\%      \\\midrule
		\mon{} (ours) & & \\\midrule
		\hspace{8pt} Single conv & 172K & 88.7$\pm$1.1\%            \\
		\hspace{8pt} Multi-tier & 170K & \textbf{92.4$\pm$0.1\%}          \\ [0.05in]
		\multicolumn{3}{c}{\textbf{MNIST}}\\ \midrule
		\textbf{Method}      & \textbf{Model size} & \textbf{Acc.} \\ \midrule
		Neural ODE$^{\ddagger}$&  84K & 96.4\%         \\
		Aug. Neural ODE$^{\ddagger}$&  84K & 98.2\%      \\\midrule
		\mon{} (ours) & & \\\midrule
		\hspace{8pt} Fully connected & 84K & 98.1$\pm$0.1\%            \\
		\hspace{8pt} Single conv & 84K & \textbf{99.1$\pm$0.1\%}             \\
		\hspace{8pt} Multi-tier & 81K & 99.0$\pm$0.1\%          \\
        \\
	\end{tabular}
	\renewcommand{\@captype}{table}
    \caption{Test accuracy of \mon{} models compared to Neural ODEs and Augmented Neural ODEs. *with data augmentation; $^\dagger$best test accuracy before training diverges; $^\ddagger$as reported in \cite{dupont2019augmented}.}
	\label{tab:results}
	\makeatother
\end{minipage}
\hfill
\begin{minipage}{.50\textwidth}
    \centering
    \includegraphics{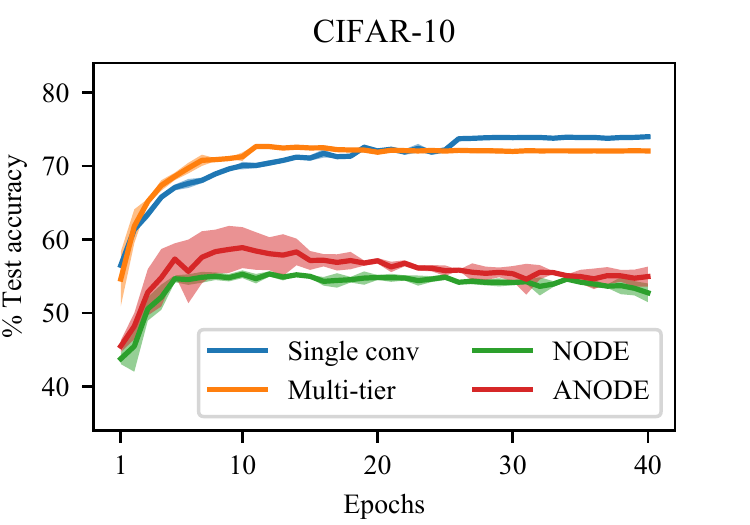}
    \caption{Test accuracy of \mon{}s during training on CIFAR-10, with NODE \cite{chen2018neural} and ANODE \cite{dupont2019augmented} for comparison. NODE and ANODE curves obtained using code provided by \cite{dupont2019augmented}.}
    \label{fig:cifar_train}
    \centering
    \includegraphics{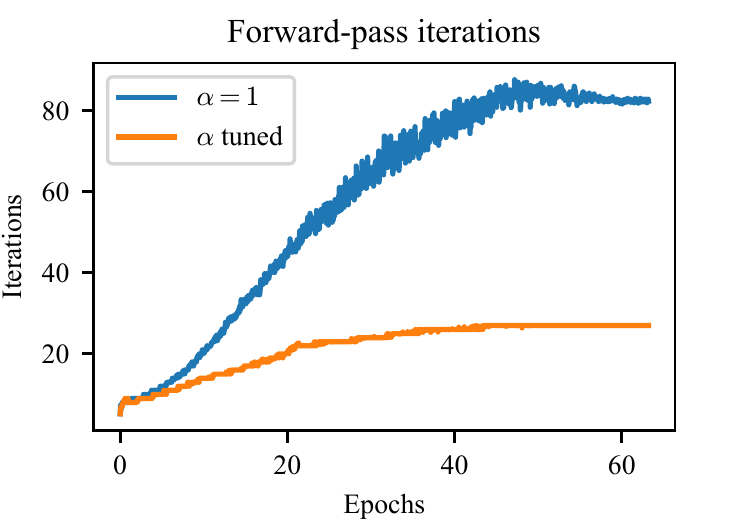}
    \caption{Iterations required by Peaceman-Rachford equilibrium solving over the course of training, for best $\alpha$ and $\alpha=1$.}
    \label{fig:tuning_alpha_fw}
\end{minipage}
\end{figure}
\begin{figure}[t]
\centering
\includegraphics[width=\textwidth]{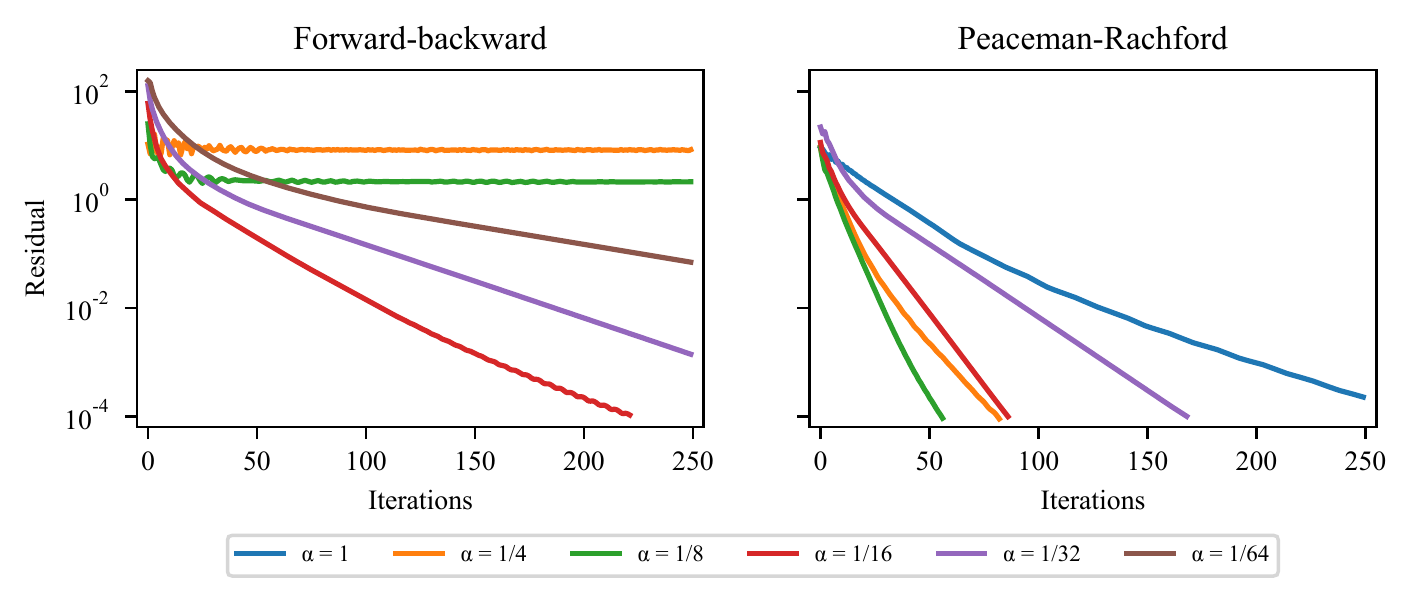}
\caption{Convergence of Peaceman-Rachford and forward-backward equilibrium solving, on fully-trained model.}
\label{fig:convergence}
\end{figure}

One challenge is to ensure that we can represent $W$ with the form $(1-m)I - A^T A + B - B^T$ while still maintaining the above structure, which we achieve by parameterizing each $W_{ij}$ block appropriately. Another consideration is the inversion of the multi-tier operator, which can be achieved via the FFT similarly as for single-convolutional $W$, but with additional complexity arising from the fact that the $A_{i+1,i}$ convolutions are strided. These details are described in Appendix \ref{apx:multi-tier-inverse}.

\section{Experiments}
To test the expressive power and training stability of \mon{}s, we evaluate the \mon{} instantiations described in Section \ref{sec:example_mons} on several image classification benchmarks. We take as a point of comparison the Neural ODE (NODE) \citep{chen2018neural} and Augmented Neural ODE (ANODE) \citep{dupont2019augmented} models, the only other implicit-depth models which guarantee the existence and uniqueness of a solution. We also assess the stability of training standard DEQs of the same form as our \mon{}s.

The training process relies upon the operator splitting algorithms derived in Sections \ref{subsec:fixed_pt} and \ref{subsec:backprop}; for each batch of examples, the forward pass of the network involves finding the network fixed point (Algorithm \ref{alg:FB-forward} or \ref{alg:PR-forward}), and the backward pass involves backpropagating the loss gradient through the fixed point (Algorithm \ref{alg:FB-backward} or \ref{alg:PR-backward}). We analyze the convergence properties of both the forward-backward and Peaceman-Rachford operator splitting methods, and use the more efficient Peaceman-Rachford splitting for our model training. For further training details and model architectures see Appendix \ref{apx:experiment_details}.  Experiment code can be found at \url{http://github.com/locuslab/monotone_op_net}.
\paragraph{Performance on image benchmarks}
\label{subsec:results}
We train small \mon{}s on CIFAR-10 \citep{krizhevsky2009learning}, SVHN \citep{netzer2011reading}, and MNIST \citep{lecun2010mnist}, with a similar number of parameters as the ODE-based models reported in \cite{chen2018neural} and \cite{dupont2019augmented}. The results (averages over three runs) are shown in Table \ref{tab:results}. Training curves for \mon{}s, NODE, and ANODE on CIFAR-10 are show in Figure (\ref{fig:cifar_train}) and additional training curves are shown in Figure \ref{fig:training}. Notably, except for the fully-connected model on MNIST, all \mon{}s significantly outperform the ODE-based models across datasets. We highlight the performance of the small single convolution \mon{} on CIFAR-10 which outperforms Augmented Neural ODE by 15.1\%. 

We also attempt to train standard DEQs of the same structure as our small multi-tier convolutional \mon{}. We train DEQs both with unconstrained $W$ and with $W$ having the monotone parameterization (\ref{eqn:monotone-param}), and solve for the fixed point using Broyden's method as in \cite{bai2019deep}. All models quickly diverge during the first few epochs of training, even when allowed 300 iterations of Broyden's method.

Additionally, we train two larger \mon{}s on CIFAR-10 with data augmentation. The strong performance (89\% test accuracy) of the multi-tier network, in particular, goes a long way towards closing the performance gap with traditional deep networks. For comparison, we train larger NODE and ANODE models with a comparable number of parameters (\textasciitilde1M). These attain higher test accuracy than the smaller models during training, but diverge after 10-30 epochs (see Figure \ref{fig:training}). 

\paragraph{Efficiency of operator splitting methods}
\label{subsec:convergence}
We compare the convergence rates of Peaceman-Rachford and forward-backward splitting on a fully trained model, using a large multi-tier \mon{} trained on CIFAR-10. Figure \ref{fig:convergence} shows convergence for both methods during the forward pass, for a range of $\alpha$. As the theory suggests, the convergence rates depend strongly on the choice of $\alpha$. Forward-backward does not converge for $\alpha > 0.125$, but convergence speed varies inversely with $\alpha$ for $\alpha < 0.125$. In contrast, Peaceman-Rachford is guaranteed to converge for any $\alpha > 0$ but the dependence is non-monotonic. We see that, for the optimal choice of $\alpha$, Peaceman-Rachford can converge much more quickly than forward-backward. The convergence rate also depends on the Lipschitz parameter $L$ of $I-W$, which we observe increases during training. Peaceman-Rachford therefore requires an increasing number of iterations during both the forward pass (Figure \ref{fig:tuning_alpha_fw}) and backward pass (Figure \ref{fig:tuning_alpha_bw}).

Finally, we compare the efficiency of \mon{} to that of the ODE-based models. We report the time and number of function evaluations (OED solver steps or operator splitting iterations) required by the \textasciitilde 170k-parameter models to train on CIFAR-10 for 40 epochs. The \mon{}, neural ODE, and ANODE training takes respectively 1.4, 4.4, and 3.3 hours, with an average of 20, 96, and 90 function evals per minibatch. Note however that training the larger 1M-parameter \mon{} on CIFAR-10 requires 65 epochs and takes 16 hours. All experiments are run on a single RTX 2080 Ti GPU.

\section{Conclusion}
The connection between monotone operator splitting and implicit network equilibria brings a new suite of tools to the study of implicit-depth networks. The strong performance, efficiency, and guaranteed stability of \mon{} indicate that such networks could become practical alternatives to deep networks, while the flexibility of the framework means that performance can likely be further improved by, e.g. imposing additional structure on $W$ or employing other operator splitting methods. At the same time, we see potential for the study of \mon{}s to inform traditional deep learning itself. The guarantees we can derive about what architectures and algorithms work for implicit-depth networks may give us insights into what will work for explicit deep networks.

\section*{Broader impact statement}
While the main thrust of our work is foundational in nature, we do demonstrate the potential for implicit models to become practical alternatives to traditional deep networks. Owing to their improved memory efficiency, these networks have the potential to further applications of AI methods on edge devices, where they are currently largely impractical. However, the work is still largely algorithmic in nature, and thus it is much less clear the immediate societal-level benefits (or harms) that could result from the specific tehniques we propose and demonstrate in this paper.

\section*{Acknowledgements}
Ezra Winston is supported by a grant from the Bosch Center for Artificial Intelligence.

\bibliographystyle{abbrvnat}
\bibliography{deep_mo}

\begin{thebibliography}{27}
\providecommand{\natexlab}[1]{#1}
\providecommand{\url}[1]{\texttt{#1}}
\expandafter\ifx\csname urlstyle\endcsname\relax
  \providecommand{\doi}[1]{doi: #1}\else
  \providecommand{\doi}{doi: \begingroup \urlstyle{rm}\Url}\fi

\bibitem[Agrawal et~al.(2019)Agrawal, Amos, Barratt, Boyd, Diamond, and
  Kolter]{agarwal2019differentiable}
A.~Agrawal, B.~Amos, S.~Barratt, S.~Boyd, S.~Diamond, and J.~Z. Kolter.
\newblock Differentiable convex optimization layers.
\newblock In \emph{Neural Information Processing Systems}, 2019.

\bibitem[Almeida(1990)]{almeida1990learning}
L.~B. Almeida.
\newblock A learning rule for asynchronous perceptrons with feedback in a
  combinatorial environment.
\newblock In \emph{Artificial Neural Networks}. 1990.

\bibitem[Amos and Kolter(2017)]{amos2017optnet}
B.~Amos and J.~Z. Kolter.
\newblock {OptNet}: Differentiable optimization as a layer in neural networks.
\newblock In \emph{International Conference on Machine Learning}, 2017.

\bibitem[Amos et~al.(2018)Amos, Jimenez, Sacks, Boots, and
  Kolter]{amos2018differentiable}
B.~Amos, I.~Jimenez, J.~Sacks, B.~Boots, and J.~Z. Kolter.
\newblock Differentiable mpc for end-to-end planning and control.
\newblock In \emph{Advances in Neural Information Processing Systems}, pages
  8299--8310, 2018.

\bibitem[Bai et~al.(2019)Bai, Kolter, and Koltun]{bai2019deep}
S.~Bai, J.~Z. Kolter, and V.~Koltun.
\newblock Deep equilibrium models.
\newblock In \emph{Advances in Neural Information Processing Systems}, pages
  688--699, 2019.

\bibitem[Bauschke et~al.(2011)Bauschke, Combettes, et~al.]{bauschke2011convex}
H.~H. Bauschke, P.~L. Combettes, et~al.
\newblock \emph{Convex analysis and monotone operator theory in Hilbert
  spaces}, volume 408.
\newblock Springer, 2011.

\bibitem[Bibi et~al.(2019)Bibi, Ghanem, Koltun, and Ranftl]{bibi2018deep}
A.~Bibi, B.~Ghanem, V.~Koltun, and R.~Ranftl.
\newblock Deep layers as stochastic solvers.
\newblock In \emph{International Conference on Learning Representations}, 2019.
\newblock URL \url{https://openreview.net/forum?id=ryxxCiRqYX}.

\bibitem[Chen et~al.(2018)Chen, Rubanova, Bettencourt, and
  Duvenaud]{chen2018neural}
T.~Q. Chen, Y.~Rubanova, J.~Bettencourt, and D.~K. Duvenaud.
\newblock Neural ordinary differential equations.
\newblock In \emph{Advances in neural information processing systems}, pages
  6571--6583, 2018.

\bibitem[Combettes and Pesquet(2020)]{combettes2020deep}
P.~L. Combettes and J.-C. Pesquet.
\newblock Deep neural network structures solving variational inequalities.
\newblock \emph{Set-Valued and Variational Analysis}, pages 1--28, 2020.

\bibitem[Dupont et~al.(2019)Dupont, Doucet, and Teh]{dupont2019augmented}
E.~Dupont, A.~Doucet, and Y.~W. Teh.
\newblock Augmented neural odes.
\newblock In \emph{Advances in Neural Information Processing Systems}, pages
  3134--3144, 2019.

\bibitem[El~Ghaoui et~al.(2019)El~Ghaoui, Gu, Travacca, and
  Askari]{el2019implicit}
L.~El~Ghaoui, F.~Gu, B.~Travacca, and A.~Askari.
\newblock Implicit deep learning.
\newblock \emph{arXiv preprint arXiv:1908.06315}, 2019.

\bibitem[Gould et~al.(2016)Gould, Fernando, Cherian, Anderson, Cruz, and
  Guo]{gould2016differentiating}
S.~Gould, B.~Fernando, A.~Cherian, P.~Anderson, R.~S. Cruz, and E.~Guo.
\newblock On differentiating parameterized argmin and argmax problems with
  application to bi-level optimization.
\newblock \emph{arXiv preprint arXiv:1607.05447}, 2016.

\bibitem[Gould et~al.(2019)Gould, Hartley, and Campbell]{gould2019deep}
S.~Gould, R.~Hartley, and D.~Campbell.
\newblock Deep declarative networks: A new hope.
\newblock \emph{arXiv preprint arXiv:1909.04866}, 2019.

\bibitem[Hochreiter and Schmidhuber(1997)]{hochreiter1997long}
S.~Hochreiter and J.~Schmidhuber.
\newblock Long short-term memory.
\newblock \emph{Neural computation}, 9\penalty0 (8):\penalty0 1735--1780, 1997.

\bibitem[Johnson et~al.(2016)Johnson, Duvenaud, Wiltschko, Adams, and
  Datta]{johnson2016composing}
M.~Johnson, D.~K. Duvenaud, A.~Wiltschko, R.~P. Adams, and S.~R. Datta.
\newblock Composing graphical models with neural networks for structured
  representations and fast inference.
\newblock In \emph{Advances in Neural Information Processing Systems}, 2016.

\bibitem[Kingma and Ba(2015)]{DBLP:journals/corr/KingmaB14}
D.~P. Kingma and J.~Ba.
\newblock Adam: {A} method for stochastic optimization.
\newblock In Y.~Bengio and Y.~LeCun, editors, \emph{3rd International
  Conference on Learning Representations, {ICLR} 2015, San Diego, CA, USA, May
  7-9, 2015, Conference Track Proceedings}, 2015.
\newblock URL \url{http://arxiv.org/abs/1412.6980}.

\bibitem[Krizhevsky and Hinton(2009)]{krizhevsky2009learning}
A.~Krizhevsky and G.~Hinton.
\newblock Learning multiple layers of features from tiny images.
\newblock Technical report, University of Toronto, 2009.

\bibitem[LeCun et~al.(2010)LeCun, Cortes, and Burges]{lecun2010mnist}
Y.~LeCun, C.~Cortes, and C.~Burges.
\newblock Mnist handwritten digit database.
\newblock \emph{ATT Labs [Online]. Available:
  http://yann.lecun.com/exdb/mnist}, 2, 2010.

\bibitem[Liao et~al.(2018)Liao, Xiong, Fetaya, Zhang, Yoon, Pitkow, Urtasun,
  and Zemel]{liao2018reviving}
R.~Liao, Y.~Xiong, E.~Fetaya, L.~Zhang, K.~Yoon, X.~Pitkow, R.~Urtasun, and
  R.~Zemel.
\newblock Reviving and improving recurrent back-propagation.
\newblock In \emph{International Conference on Machine Learning}, pages
  3082--3091, 2018.

\bibitem[Ling et~al.(2018)Ling, Fang, and Kolter]{ling2018game}
C.~K. Ling, F.~Fang, and J.~Z. Kolter.
\newblock What game are we playing? {E}nd-to-end learning in normal and
  extensive form games.
\newblock In \emph{International Joint Conference on AI}, 2018.

\bibitem[Linsley et~al.(2020)Linsley, Ashok, Govindarajan, Liu, and
  Serre]{linsley2020stable}
D.~Linsley, A.~K. Ashok, L.~N. Govindarajan, R.~Liu, and T.~Serre.
\newblock Stable and expressive recurrent vision models, 2020.

\bibitem[Netzer et~al.(2011)Netzer, Wang, Coates, Bissacco, Wu, and
  Ng]{netzer2011reading}
Y.~Netzer, T.~Wang, A.~Coates, A.~Bissacco, B.~Wu, and A.~Y. Ng.
\newblock Reading digits in natural images with unsupervised feature learning.
\newblock In \emph{NIPS Workshop on Deep Learning and Unsupervised Feature
  Learning}, 2011.

\bibitem[Pineda(1988)]{pineda1988generalization}
F.~J. Pineda.
\newblock Generalization of back propagation to recurrent and higher order
  neural networks.
\newblock In \emph{Advances in Neural Information Processing Systems}, 1988.

\bibitem[Rajeswaran et~al.(2019)Rajeswaran, Finn, Kakade, and
  Levine]{rajeswaran2019meta}
A.~Rajeswaran, C.~Finn, S.~M. Kakade, and S.~Levine.
\newblock Meta-learning with implicit gradients.
\newblock In \emph{Advances in Neural Information Processing Systems}, 2019.

\bibitem[Revay and Manchester(2019)]{revay2019contracting}
M.~Revay and I.~R. Manchester.
\newblock Contracting implicit recurrent neural networks: Stable models with
  improved trainability.
\newblock \emph{arXiv preprint arXiv:1912.10402}, 2019.

\bibitem[Ryu and Boyd(2016)]{ryu2016primer}
E.~K. Ryu and S.~Boyd.
\newblock Primer on monotone operator methods.
\newblock \emph{Appl. Comput. Math}, 15\penalty0 (1):\penalty0 3--43, 2016.

\bibitem[Smith and Topin(2019)]{smith2019super}
L.~N. Smith and N.~Topin.
\newblock Super-convergence: Very fast training of neural networks using large
  learning rates.
\newblock In \emph{Artificial Intelligence and Machine Learning for
  Multi-Domain Operations Applications}, volume 11006, page 1100612.
  International Society for Optics and Photonics, 2019.

\end{thebibliography}

\newpage
\appendix

\section{Monotone operator theory}
\label{apx:monotone}
\renewcommand{\thetable}{\Alph{section}\arabic{table}}
\setcounter{table}{0}
\renewcommand{\thefigure}{\Alph{section}\arabic{figure}}
\setcounter{figure}{0}
\renewcommand{\thealgorithm}{\Alph{section}\arabic{algorithm}}
\setcounter{algorithm}{0}
\renewcommand{\theequation}{\Alph{section}\arabic{equation}}
\setcounter{equation}{0}

We briefly review some of the basic properties of monotone operators that we make use of throughout this work. A \emph{relation} or \emph{operator} (which in our setting will often roughly correspond to a set-valued function), is a subset of the space $F \subseteq \mathbb{R}^n \times \mathbb{R}^n$; we use the notation $F(x) = \{y | (x,y) \in F\}$ or simply $F(x) = y$ if only a single $y$ is contained in this set.  We make use of a few basic operators and relations: the identity operator $I = \{(x,x) |  x \in \mathbb{R}^n\}$; the operator sum $(F + G)(x) = \{(x, y + z) | (x,y) \in F, (x,z) \in G\}$; the inverse operator $F^{-1}(x,y) = \{(y,x) | (x,y) \in F\}$; and the subdifferential operator $\partial f = \{ (x, \partial f(x)) | \ \in \mathrm{dom} f\}$.  An operator $F$ has Lipschitz constant $L$ if for any $(x,u), (y,v) \in F$
\begin{equation}
\|u-v\|_2 \leq L \|x - y\|_2.
\end{equation}

An operator $F$ is monotone if
\begin{equation}
(u - v)^T(x-y) \geq 0, \;\; \forall (x,u), (y,v) \in F
\end{equation}
which for the case of $F$ being a function $F : \mathbb{R}^n \rightarrow \mathbb{R}^n$ is equivalent to the condition
\begin{equation}
(F(x) - F(y))^T(x-y) \geq 0, \;\; \forall x,y \in \dom F.
\end{equation}
In the case of scalar-valued functions, this corresponds to our common notion of a monotonic function.  The operator $F$ is strongly monotone with parameter $m$ if
\begin{equation}
(u - v)^T(x-y) \geq m \|x - y\|^2, \;\; \forall (x,u), (y,v) \in F.
\end{equation}
A monotone operator $F$ is \emph{maximal monotone} if no other monotone operator strictly contains it; formally, most of the convergence properties we use require maximal monotonicity, though we are intentionally informal about this and merely use the established fact that several well-known operators are maximal monotone.  Specifically, a linear operator $F(x) = G x + h$ for $G \in \mathbb{R}^{n \times n}$ and $h \in \mathbb{R}^n$ is (maximal) monotone if and only if $G + G^T \succeq 0$ and strongly monotone if $G + G^T \succeq m I$.  Similarly, a subdifferentiable operator $\partial f$ is maximal monotone iff $f$ is a convex closed proper (CCP) function.

The resolvent and Cayley operators for an operator $F$ are denoted $R_F$ and $C_F$ and respectively defined as
\begin{equation}
R_F = (I + \alpha F)^{-1}, \;\; C_F = 2 R_F - I
\end{equation}
for any $\alpha > 0$.  The resolvent and Cayley operators are non-expansive (i.e., have Lipschitz constant $L \leq 1$) for any maximal monotone $F$, and are contractive (i.e. $L < 1$) for strongly monotone $F$.

We will mainly use two well-known properties of these operators.  First, when $F(x) = G x + h$ is linear, then
\begin{equation}
R_F(x) = (I + \alpha G)^{-1}(x - \alpha h)
\end{equation}
and when $F = \partial f$ for some CCP function $f$, then the resolvent is given by a proximal operator
\begin{equation}
R_F(x) = \prox_f^\alpha(x) \equiv \argmin_z \frac{1}{2}\|x - z\|_2^2 + \alpha f(z).
\end{equation}

Operator splitting approaches refer to methods to find a zero in a sum of operators (assumed here to be maximal monotone), i.e., find $x$ such that
\begin{equation}
0 \in (F + G)(x).
\end{equation}
There are many such operator splitting methods, which lead to different approaches in their application to our subsequent implicit networks, but the two we use mainly in this work are 1) \emph{forward-backward} splitting, given by the update
\begin{equation}
\label{eqn:forward-backward}
x^{k+1} := R_G(x^k - \alpha F(x^k));
\end{equation}
and 2) \emph{Peaceman-Rachford} splitting, which is given by the iteration
\begin{equation}
\label{eqn:peaceman-rachford}
u^{k+1} = C_F C_G (u^k), \;\; x^k = R_G(u^k).
\end{equation}
Both methods will converge linearly to an $x$ that is a zero of the operator sum under certain conditions: a sufficient condition for forward-backward to converge is that $F$ be strongly monotone with parameter $m$ and Lipschitz with  constant $L$ and $\alpha < 2m/L^2$; for Peaceman-Rachford, the method will converge for any choice of $\alpha$ for strongly monotone $F$, though the convergence speed will often vary substantially based upon $\alpha$.

\section{Proofs}
\label{apx:proofs}
\renewcommand{\thetable}{\Alph{section}\arabic{table}}
\setcounter{table}{0}
\renewcommand{\thefigure}{\Alph{section}\arabic{figure}}
\setcounter{figure}{0}
\renewcommand{\thealgorithm}{\Alph{section}\arabic{algorithm}}
\setcounter{algorithm}{0}
\renewcommand{\theequation}{\Alph{section}\arabic{equation}}
\setcounter{equation}{0}

\subsection{Proof of Theorem \ref{thm:monotone-formulation}}
\begin{proof}
The proof here is immediate: the forward-backward algorithm applied to the above operators with $\alpha = 1$ corresponds exactly to the network's fixed-point iteration:
\begin{equation}
\begin{split}
z^{k+1} & = R_G(z^k - \alpha F(z^k)) \\
& = \prox_f^\alpha (z^k - \alpha (I-W)z^k + \alpha(Ux + b)) \\
& = \prox_f^1(W z^k + Ux + b).\qedhere
\end{split}
\end{equation}
\end{proof}

\subsection{Proof of Proposition \ref{prop:semidef}}
\begin{proof}
First assume $W$ is of this form.  Then clearly
\begin{equation}
(I - W)/2 + (I-W)^T/2= mI + A^T A \succeq mI.
\end{equation}
Alternatively, if $I - W \succeq mI \Longleftrightarrow (1-m)I \succeq (W+W^T)/2$, then
\begin{equation}
(W+W^T)/2 = (1-m)I - A^T A.
\end{equation}
Thus
\begin{equation}
\begin{split}
W & = (W + W^T)/2 + (W - W^T)/2 \\ & = (1-m)I - A^T A + B - B^T. \qedhere
\end{split}
\end{equation}
\end{proof}

\subsection{Proof of Theorem \ref{thm:backward}}
\begin{proof}
Differentiating both sides of the fixed-point equation $z^\star = \sigma(Wz^\star + Ux + b)$ we have
\begin{equation}
\begin{split}
\frac{\partial z^\star}{\partial (\cdot)} & = \frac{\partial \prox_f^1 (Wz^\star + U x + b)}{\partial (\cdot)} \\
& = J \left ( W \frac{\partial z^\star}{\partial (\cdot)} + \frac{\partial (Wz^\star + U x + b)}{\partial (\cdot)} \right )
\end{split}
\end{equation}
for $J$ defined in \eqref{eqn:J} (we require the Clarke generalized Jacobian owing to the fact that the nonlinearity need not be a smooth function).  Rearranging we get
\begin{equation}
\begin{split}
(I - JW) & \frac{\partial z^\star}{\partial (\cdot)} = J \frac{\partial (Wz^\star + U x + b)}{\partial (\cdot)} \\
\Leftrightarrow & \frac{\partial z^\star}{\partial (\cdot)} = (I - JW)^{-1} J \frac{\partial (Wz^\star + U x + b)}{\partial (\cdot)}.
\end{split}
\end{equation}
To show that this derivative always exists, we need to show that the $I - JW$ matrix is nonsingular.  Owing to the fact that proximal operators are monotone and non-expansive, we have $0 \leq J_{ii} \leq 1$.  First, letting $\lambda(\cdot)$ denote the set of eigenvalues of a matrix, note that
\begin{equation}
\lambda(I - JW) = \lambda(I - J^{1/2}WJ^{1/2}).
\end{equation}
This follows from the similarity transform $\lambda(I - JW) = \lambda(J^{-1/2}(I - JW)J^{1/2})$ for $J > 0$ and the case of $J_{ii} = 0$ follows via the continuity of eigenvalues taking $\lim J_{ii}\rightarrow 0$.  Now, using the fact that $0\preceq J \preceq I$, we have
\begin{equation}
\begin{split}
& \mathrm{Re}\; \lambda(I - J^{1/2}WJ^{1/2}) \\ & = \mathrm{Re}\; \lambda(I - J + J^{1/2}(I - W)J^{1/2}) > 0
\end{split}
\end{equation}
since $I - W \succeq mI$ and $I - J \succeq 0$.
\end{proof}

\subsection{Proof of Theorem \ref{thm:backward-monotone}}
\begin{proof}
We begin with the case where $J_{ii} \neq 0$ and thus $D_{ii} < \infty$.  As above, because proximal operators are themselves monotone non-expansive operators, we always $0 \leq J_{ii} \leq 1$, so that $D_{ii} \geq 0$.  Now, first assuming that $J_{ii} > 0$, and hence $D_{ii} < \infty$, we have
\begin{equation}
\begin{split}
& u = (I - JW)^{-T} v \\
\Leftrightarrow \; & (I - W^T (I+D)^{-1})u = v \\
\Leftrightarrow \; & W^{-T}u - (I+D)^{-1}u = W^{-T}v \\
\Leftrightarrow \; & (I+D) W^{-T}u - u = (I+D)W^{-T}v \\
\Leftrightarrow \; & W^{-T}u - u + D W^{-T} u = (I+D)W^{-T}v \\
\Leftrightarrow \; & \tilde{u} - W^T \tilde{u} + D \tilde{u} = (I+D)W^{-T}v
\end{split}
\end{equation}
where we define $\tilde{u} = W^{-T} u$.  To simplify the right hand side of this equation and remove the explicit $W^{-T}v$ terms\footnote{Although we could solve this operator splitting problem directly, the presence of the $W^{-T}v$ term has two notable downsides: 1) even if the $W$ matrix itself is nonsingular, it may be arbitrarily close to a singular matrix, thus making direct solutions with this matrix introduce substantial numerical errors; and 2) for operator splitting methods that do \emph{not} require an inverse of $W$ (e.g. forward-backward splitting), it would be undesirable to require an explicit inverse in the backward pass.} we note that
\begin{equation}
(I - JW)^{-T} = (I - W^TJ)^{-1} = I + (I - W^TJ)^{-1}W^T J.
\end{equation}
Thus, we can always solve the above equation with the $v$ term of the form $W^T J v$, giving
\begin{equation}
(I+D) W^{-T} W^T J v = (I+D)Jv = v.
\end{equation}
This gives us a (linear) operator splitting problem with the $\tilde{F}$ and $\tilde{G}$ operators given in \eqref{eqn:splitting_bw}.

To handle the case where $J_{ii} = 0 \Leftrightarrow D_{ii} = \infty$, we can simply take the limit $D_{ii} \rightarrow \infty$, and note that all the operators are well-defined for this case.  For instance, the resolvent operator
\begin{equation}
R_{\tilde{G}}(u) = (I + \alpha (I+D))^{-1}(u + \alpha v)
\end{equation}
and thus
\begin{equation}
R_{\tilde{G}}(u)_{ii} = \frac{u + \alpha v}{1 + \alpha(1+D_{ii})} \rightarrow 0
\end{equation}
as $D_{ii} \rightarrow \infty$.

Finally, owing to the fact that $I - W^T \succeq mI$ and $D_{ii} \geq 0$, the $\tilde{F}$ and $\tilde{G}$ operators are strongly monotone and monotone respectively, we conclude that operator splitting techniques applied to the problem will be guaranteed to converge.
\end{proof}

\section{Convolutional \mon{}s}
\label{apx:conv-mons}
\renewcommand{\thetable}{\Alph{section}\arabic{table}}
\setcounter{table}{0}
\renewcommand{\thefigure}{\Alph{section}\arabic{figure}}
\setcounter{figure}{0}
\renewcommand{\thealgorithm}{\Alph{section}\arabic{algorithm}}
\setcounter{algorithm}{0}
\renewcommand{\theequation}{\Alph{section}\arabic{equation}}
\setcounter{equation}{0}

\subsection{Inversion via the discrete Fourier transform}
\label{apx:FFT-inverse}

First consider the case where $W \in \mathbb{R}^{s^2 \times s^2}$ is the matrix representation of an unstrided (circular) convolution with a single input channel and single output channel. The convolution operates on vectorized $s\times s$ inputs. It is well known that $W$ is diagonalized by the 2D DFT operator $\mathscr{F}_s = F_s \otimes F_s$ where $F_s$ is the Fourier basis matrix $(F_s)_{ij} = \frac{1}{s}\omega^{(i-1)(j-1)}$ and $\omega = \exp(2\pi \iota / s)$. We denote $\iota = \sqrt{-1}$ to avoid confusion with the index $i$. So 
\begin{equation}
\mathscr{F}_s W\mathscr{F}_s^* = D,    
\end{equation}
a complex diagonal matrix.

Now take the case where $W \in \mathbb{R}^{ns^2 \times ns^2}$ has $n$ input and output channels. Then
\begin{equation}
\label{eqn:D}
(I_n \otimes \mathscr{F}_s)W(I_n \otimes \mathscr{F}_s^*)
= D = \left [ \begin{array}{cccc}
D_{11} & D_{12} & \cdots & D_{1n}  \\
D_{21} & D_{22} & \cdots & D_{2n} \\
\vdots & \vdots & \ddots & \vdots \\
D_{n1} & D_{n2} & \cdots & D_{nn} \end{array} \right ]
\end{equation}
where $I_n$ is the $n \times n$ identity matrix and each block $D_{ij} \in \mathbb{C}^{s^2 \times s^2}$ is a complex diagonal matrix. We will denote $\mathscr{F}_{s,n} = I_n \otimes \mathscr{F}_s$.

It is more efficient to consider the permuted form of $D$
\begin{equation}
\label{eqn:Dhat}
\hat{D} = \left [ \begin{array}{cccc}
\hat{D}^1 & 0 & \cdots & 0  \\
0 & \hat{D}^2 & \cdots & 0 \\
\vdots & \vdots & \ddots & \vdots \\
0 & 0 & \cdots & \hat{D}^{s^2}\end{array} \right ]
\end{equation}
where each block $\hat{D}^k \in \mathbb{C}^{n \times n}$, consists of the $k$th diagonal elements of all the $D_{ij}$, that is $\hat{D}^k_{ij} = (D_{ij})_{kk}$. Then inverting or multiplying by $\hat{D}$ reduces to inverting or multiplying by the diagonal blocks, which is amenable to accelerated batch-wise computation in the form of an $s^2 \times n \times n$ tensor. However, the original form \eqref{eqn:D} is more convenient mathematically and we use that here.

To perform the required inversion of the operator
\begin{equation}
    I+\alpha(I-W) = (1+\alpha m)I + \alpha A^T A - \alpha B + \alpha B^T
\end{equation}
we use the fact that $\mathscr{F}_{s,n}$ is unitary and obtain
\begin{equation}
\begin{split}
(1+\alpha m)I &+ \alpha A^T A - \alpha B + \alpha B^T \\
 &= (1+\alpha m)\mathscr{F}_{s,n}^*\mathscr{F}_{s,n} + \mathscr{F}_{s,n}^* (\alpha D_A^* \mathscr{F}_{s,n} \mathscr{F}_{s,n}^* D_A - D_B + D_B^*) \mathscr{F}_{s,n}\\
 &= \mathscr{F}_{s,n}^* ((1+\alpha m)I + \alpha D_A^* D_A - D_B + D_B^*) \mathscr{F}_{s,n}.
\end{split}
\end{equation}
The inner term here itself has the blockwise-diagonal form \eqref{eqn:D}. Thus, we can multiply a set of hidden units $z$ by the inverse of this matrix by considering the permuted form \eqref{eqn:Dhat}, inverting each block $\hat{D}^i$, taking the FFT of $z$, multiplying each corresponding block of $\mathscr{F}_{s,n}z$ by the corresponding inverse, then taking the inverse FFT.

\subsection{Zero padding}
\label{apx:zero-padding}
One drawback to the above method is that using the FFT in this manner requires that all convolutions be circular.  While empirically there is little drawback to simply replacing traditional convolutions with their circular variants, in some cases it may be desirable to avoid this setting, where information about the image may wrap around the borders.  If it is desirable to avoid this, we explicitly remove any circular dependence by zero-padding the hidden units with $(k-1)/2$ border pixels, where $k$ denotes the receptive field size of the convolution.  This zero padding can then be enforced by simply setting all the border entries to zero within the \emph{nonlinearity} of the network; because setting an element to zero is equivalent to the proximal operator for the indicator of the zero set, such operations still fit within the monotone operator setting.

\section{Multi-tier \mon{}s}
\label{apx:multi-tier-inverse}
\renewcommand{\theproposition}{\Alph{section}\arabic{proposition}}
\setcounter{proposition}{0}
\renewcommand{\thetable}{\Alph{section}\arabic{table}}
\setcounter{table}{0}
\renewcommand{\thefigure}{\Alph{section}\arabic{figure}}
\setcounter{figure}{0}
\renewcommand{\thealgorithm}{\Alph{section}\arabic{algorithm}}
\setcounter{algorithm}{0}
\renewcommand{\theequation}{\Alph{section}\arabic{equation}}
\setcounter{equation}{0}

\subsection{Parameterization}
Recall the setting of Section \ref{subsec:multi-tier}, with

\begin{equation}
z = \left [ \begin{array}{c} z_1 \in \mathbb{R}^{n_1 s_1^2} \\
z_2 \in \mathbb{R}^{n_2 s_2^2} \\
\vdots \\
z_L \in \mathbb{R}^{n_L s_L^2}
\end{array} \right ],
~~~~~~~W = \left [ \begin{array}{ccccc}
W_{11} & 0 & 0 & \cdots & 0  \\
W_{21} & W_{22} & 0 & \cdots & 0 \\
0 & W_{32} & W_{33} & \cdots & 0  \\
\vdots & \vdots & \vdots & \ddots & \vdots \\
0 & 0 & 0 & \cdots & W_{LL} \end{array} \right ].
\label{eqn:multi-tier_form}
\end{equation}

To ensure $W$ has the form $(1-m)I - A^T A + B - B^T$, we restrict both $A$ and $B$ to have the same bidiagonal structure as $W$. Then the diagonal terms $W_{ii}$ have the form
\begin{equation}
W_{ii} = (1-m)I - A_{ii}^T A_{ii} - A_{i+1,i}^T A_{i+1,i} + B_{ii} - B_{ii}^T
\end{equation}
for $i<L$ and
\begin{equation}
W_{LL} = (1-m)I - A_{LL}^T A_{LL} + B_{LL} - B_{LL}^T.
\end{equation}

To compute the off-diagonal terms $W_{i+1,i}$ note that restricting $W$ to be bidiagonal makes the off-diagonal terms of $B$ redundant. E.g. since $W_{12} = 0$, then
\begin{equation}
\begin{split}
& -A_{21}^T A_{22} - B_{21}^T = W_{12} = 0\\
\Rightarrow \; & W_{21} = -A_{22}^T A_{21} + B_{21} = -2 A_{22}^T A_{21}.\\
\end{split}
\end{equation}

\subsection{Inversion via the discrete Fourier transform} Consider $W$ of the form (\ref{eqn:multi-tier_form}) with convolutions
\begin{equation}
\label{eqn:multi-tier_form_2}
\begin{split}
W_{ii} &= (1-m)I - A_{ii}^T A_{ii} - A_{i+1,i}^T A_{i+1,i} + B_{ii} - B_{ii}^T\\
W_{LL} &= (1-m)I - A_{LL}^T A_{LL} + B_{LL} - B_{LL}^T\\
W_{i+1,i} &= -2 A_{i+1,i+1}^T A_{i+1,i}.
\end{split}
\end{equation}
Here the $A_{ii}$ and $B_{ii}$ terms are unstrided convolutions with $n_i$ input and $n_i$ output channels, while the $A_{i,i+1}$ are strided convolutions with $n_i$ input channels and $n_{i+1}$ output channels.

In order to multiply by $(I+\alpha(I-W))^{-1}$, we use back substitution to solve for $x$ in  
\begin{equation}
    z = (I+\alpha(I-W)) x.
\end{equation} Let $W' = (I+\alpha(I-W))$. The back substitution proceeds by tiers, i.e.
\begin{equation}
\begin{split}
x_1 &= W'^{-1}_{11} z_1\\
x_2 &= W'^{-1}_{22} (z_2 - W'_{21}x_1)\\
x_3 &= W'^{-1}_{33} (z_3 - W'_{32}x_2)\\
\vdots
\end{split}
\end{equation}
Therefore only the diagonal blocks $W'_{ii}$ need be inverted. The inversion of e.g.
\begin{equation}
W'_{11} = (1 + \alpha  m)I + \alpha (A_{11}^T A_{11} +  A_{21}^T A_{21} + B_{11} - B_{11}^T)\end{equation}
is complicated by the fact that $A_{21}$ is strided, so that it is no longer diagonalized by the DFT. Instead, we perform inversion using the following proposition.

\begin{proposition}
Let $A \in\mathbb{R}^{n_1s^2 \times n_1s^2}$ be an unstrided circular convolution with $n_1$ input and $n_1$ output channels, and $B \in\mathbb{R}^{n_2s^2 \times n_1s^2}$ a strided circular convolution with $n_1$ input and $n_2$ output channels and stride $r$ where $r$ divides $s$. Then
\begin{equation}
(A + B^T B)^{-1} = \mathscr{F}_{s,n_1}^*(D^{-1}_A - D^{-1}_A D^*_B (I_{n_2} \otimes K) D_B D^{-1}_A) \mathscr{F}_{s,n_1}
\end{equation}
where
\begin{equation}
\begin{split}
D_A &= \mathscr{F}_{s,n_1}A \mathscr{F}_{s,n_1}^*, \;\;\;\; D_B = \mathscr{F}_{s,n_2}B \mathscr{F}_{s,n_1}^*,\\
K &= S^T J (s^2r^2I + J^T S D_B D^{-1}_A D^*_B S^T J)^{-1} J^T S\\
\end{split}
\end{equation}
where $J = 1_{r^2} \otimes I_{s^2/r^2}$ is $r^2$ stacked identity matrices of size $(s^2/r^2) \times (s^2/r^2)$ and \\$S = (I_r \otimes S_{s/r,s})$ is a permutation matrix where $S_{a,b} \in \mathbb{R}^{ab \times ab}$ denotes the perfect shuffle matrix defined by subselecting rows of the identity matrix $I_{ab}$, here given in \textsc{MATLAB} notation:
\begin{equation}
S_{a,b} = \left [ \begin{array}{c}
I_{ab}(1:b:ab, :)  \\
I_{ab}(2:b:ab, :)  \\
\vdots \\
I_{ab}(b:b:ab, :) \end{array} \right ].
\end{equation}
\end{proposition}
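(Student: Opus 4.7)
The plan is to apply the Sherman--Morrison--Woodbury identity
\[
(A + B^T B)^{-1} = A^{-1} - A^{-1} B^T (I + B A^{-1} B^T)^{-1} B A^{-1}
\]
and peel off the FFT diagonalizations one layer at a time. Since $A$ is an unstrided circular convolution, $A^{-1} = \mathscr{F}_{s,n_1}^* D_A^{-1} \mathscr{F}_{s,n_1}$; conjugating every factor in the expression above by $\mathscr{F}_{s,n_1}$, and using $D_B = \mathscr{F}_{s,n_2} B \mathscr{F}_{s,n_1}^*$ together with unitarity of $\mathscr{F}_{s,n_2}$, immediately produces the outer sandwich $\mathscr{F}_{s,n_1}^*(\cdot)\mathscr{F}_{s,n_1}$ appearing in the claim. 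The problem therefore reduces to computing $(I + D_B D_A^{-1} D_B^*)^{-1}$ in closed form and matching the correction to the tensor form $I_{n_2}\otimes K$.

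The second step is to analyze the strided operator $B$ in Fourier coordinates. A stride-$r$ convolution factors as an unstrided convolution (block-diagonalized by the DFT) composed with a zero-filling embedding of the downsampled output back into $\mathbb{R}^{n_2 s^2}$. In the Fourier domain this embedding acts as spectral aliasing: downsampling by $r$ per spatial dimension sums the $r^2$ Fourier coefficients that alias to a common downsampled frequency. After permuting frequencies by $S = I_r \otimes S_{s/r,s}$ so that the $r^2$ mutually aliased coefficients land in adjacent positions of size $s^2/r^2$, this aliased summation is implemented exactly by $J^T = (1_{r^2}\otimes I_{s^2/r^2})^T$, with adjoint $J$ performing the corresponding replication. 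One then checks that $D_B D_A^{-1} D_B^*$ admits a factored form $(I_{n_2}\otimes S^T J)\, M\, (I_{n_2}\otimes J^T S)$, up to the DFT normalization constants, for an $(n_2 s^2/r^2) \times (n_2 s^2/r^2)$ matrix $M$ built directly from $D_B$, $D_A^{-1}$, and $D_B^*$.

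With this low-rank-type factorization in hand, a second application of Sherman--Morrison--Woodbury to $I + D_B D_A^{-1} D_B^*$ collapses the $n_2 s^2 \times n_2 s^2$ inverse to an $(n_2 s^2/r^2)\times(n_2 s^2/r^2)$ inverse, which is precisely the bracketed inverse appearing in $K$. The constant $s^2 r^2$ drops out from combining the unitary $1/s$ normalization built into $D_B$ with the $r^2$-fold multiplicity of the aliasing sum. Finally, pushing the spatial permutations past the channel index collects the whole correction into $I_{n_2}\otimes K$, matching the statement.

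The main obstacle is making the Fourier/aliasing picture precise: identifying $S$ as the correct permutation that groups mutually aliased frequencies, verifying that $J^T S$ is the correct summation operator for downsampling, and tracking the separate $1/s$ (from $(F_s)_{ij}=s^{-1}\omega^{(i-1)(j-1)}$) and $r^2$ (from the alias multiplicity) factors so that exactly the $s^2 r^2 I$ term emerges in the final formula. All of the Kronecker-product bookkeeping has to commute cleanly through the channel index so that the correction factors as $I_{n_2}\otimes K$, rather than becoming entangled with the $n_2$-dimensional channel blocks.
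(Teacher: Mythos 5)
Your plan is essentially the paper's proof: factor the strided convolution as a subsampling operator composed with an unstrided, DFT-diagonalizable convolution, identify the Fourier-domain form of the subsampling Gram matrix as the aliasing operator $\tfrac{1}{s^2r^2}S^TJJ^TS$, and finish with the Woodbury identity. The only differences are organizational (you apply Woodbury before rather than after passing to Fourier coordinates, and invoke it twice where the paper uses it once on the already-factored form $A+B^TB=\mathscr{F}_{s,n_1}^*(D_A+D_B^*(I_{n_2}\otimes\tfrac{1}{s^2r^2}S^TJJ^TS)D_B)\mathscr{F}_{s,n_1}$), and you explicitly defer the computation on which the paper spends most of its proof --- the roots-of-unity argument showing $F_sT_{r,s}F_s^*=L$ with $L_{ij}=\tfrac{1}{sr}$ exactly when $i\equiv j\ (\mathrm{mod}\ s/r)$, and the perfect-shuffle manipulations collapsing $L\otimes L$ to $\tfrac{1}{s^2r^2}S^TJJ^TS$ --- while correctly identifying that computation as the crux.
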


\begin{proof} We will show that
\begin{equation}
A + B^T B = \mathscr{F}_{s,n_1}^*(D_A + D^*_B (I_{n_2} \otimes (\frac{1}{s^2r^2}S^T J J^T S)) D_B)\mathscr{F}_{s,n_1}.
\end{equation}
The desired result then follows by applying the Woodbury matrix idenetity.

We start by breaking $B$ into an unstrided convolution $B'$ which can be diagonalized by the DFT and a matrix $U_{r,s}$ which performs the striding on each channel:
\begin{equation}
B = (I_{n_2} \otimes U_{r,s})B' = (I_{n_2} \otimes U_{r,s})\mathscr{F}_{s,n_2}^*D_B\mathscr{F}_{s,n_1}
\end{equation}
where $U_{r,s}\in \mathbb{R}^{(s^2/r^2) \times s^2}$ is defined by subselecting rows of the identity matrix:
\begin{equation}
U_{r,s} = \left [ \begin{array}{c}
I_{s^2}(1:r:s, :)  \\
I_{s^2}(rs + 1:r:(r+1)s, :)  \\
I_{s^2}(2rs + 1:r:(2r+1)s, :)  \\
\vdots \\
I_{s^2}(s^2 - sr + 1:r:s^2 - s(r-1), :) \end{array} \right ].
\end{equation}
So
\begin{equation}
B^T B = \mathscr{F}_{s,n_1}^*D^*_B(I_{n_2} \otimes (\mathscr{F}_s U_{r,s}^T U_{r,s}\mathscr{F}^*_s))D_B\mathscr{F}_{s,n_1}.
\end{equation}
We want to show that $\mathscr{F}_s U_{r,s}^T U_{r,s}\mathscr{F}_s^* = \frac{1}{s^2r^2}S^TJJ^TS$. Observe that
\begin{equation}
U_{r,s}^T U_{r,s} = (T_{r,s} \otimes T_{r,s})
\end{equation}
where $T_{r,s} \in \mathbb{R}^{s \times s}$ is given by
\begin{equation} (T_{r,s})_{ij} = 
    \begin{cases}
        1  & \mbox{if } i=j\mbox{ and }i \mbox{~(mod } r) = 1,\\
        0  & \mbox{else.}
    \end{cases}
\end{equation}
Then by the properties of Kronecker product
\begin{equation}
\mathscr{F}_s U_{r,s}^T U_{r,s}\mathscr{F}_s^* = (F_s \otimes F_s) (T_{r,s} \otimes T_{r,s})(F_s^* \otimes F_s^*) = (F_s T_{r,s} F_s^*)\otimes (F_s T_{r,s} F_s^*).
\end{equation}

We now show that $(F_s T_{r,s} F_s^*) = L$ where
\begin{equation} L_{ij} =  
    \begin{cases}
        \frac{1}{sr}  & \mbox{if } i \equiv j~(\text{mod } s/r),\\
        0  & \mbox{else.}
    \end{cases}
\end{equation}
To do so we use several properties of the roots of unity $z^k = \exp(2\pi\iota k/s)$.
\begin{enumerate}
    \item If $a \equiv b~(\text{mod } s)$ then $z^a = z^b$.
    \item If $z$ is a primitive $s$th root of unity then $z^m$ is a primitive $a$th root of unity where $a = \frac{s}{\text{gcd}(m,s)}$.
    \item The sum of the $s$th roots of unity $\sum_{k=0}^{s-1} z^k = 0$ if $s>1$.
\end{enumerate}
We first compute $L_{ij}$ for the case when $i\equiv j~(\text{mod } s/r)$, or in other words $i = j + \frac{ks}{r}$ for some integer $k$. We have
\begin{equation}
\begin{split}
L_{ij} &= \frac{1}{s^2} \sum_{p=1:r:s} \omega^{(i-1)(p-1)}\bar{\omega}^{(p-1)(j-1)}\\
&=\frac{1}{s^2} \sum_{p=0:r:s-1} \exp(2\pi\iota p(i-j)/s)\\
&=\frac{1}{s^2} \sum_{p=0:r:s-1} \exp(2\pi\iota pk/r)\\
&=\frac{1}{s^2} \sum_{p=0}^{\frac{s}{r}-1} \exp(2\pi\iota pk)\\
&= \frac{1}{sr}.
\end{split}
\end{equation}
For the case when $i\not\equiv j~(\text{mod } s/r)$, or in other words $i = j + \frac{ks}{r} + m$ for some integers $k$ and $m$ with $-\frac{s}{r} < m < \frac{s}{r}$, we have
\begin{equation}
\begin{split}
L_{ij} &= \frac{1}{s^2} \sum_{p=1:r:s} \omega^{(i-1)(p-1)}\bar{\omega}^{(p-1)(j-1)}\\
&=\frac{1}{s^2} \sum_{p=0:r:s-1} \exp(2\pi\iota p(i-j)/s)\\
&=\frac{1}{s^2} \sum_{p=0}^{\frac{s}{r}-1} \exp(2\pi\iota p(i-j)r/s)\\
&=\frac{1}{s^2} \sum_{p=0}^{\frac{s}{r}-1} \exp(2\pi\iota pmr/s)\exp(2\pi\iota pk).\\
\end{split}
\end{equation}
By property (2), since $\exp(2\pi\iota r/s)$ is a primitive $\frac{s}{r}$th root of unity, then $\exp(2\pi\iota mr/s)$ is a primitive $d$th root of unity where $d=\frac{s/r}{\text{gcd}(m,s/r)}$. Since $d$ divides $s/r$, we can split the sum into several sums of $d$th roots of unity using property (1), each of which will sum to zero by property (3).
\begin{equation}
\begin{split}
L_{ij}  &=\frac{1}{s^2} \sum_{p=0}^{\frac{s}{r}-1} \exp(2\pi\iota pmr/s)\\
&= \frac{1}{s^2} \sum_{q=0}^{\frac{s}{rd}-1}\sum_{p=0}^{ d-1} \exp(2\pi\iota (p+qd)mr/s)\\
&= \frac{1}{srd} \sum_{p=0}^{ d-1} \exp(2\pi\iota pmr/s)\\
&= 0\\
\end{split}
\end{equation}
where the second equality follows from property (1) since $p = p + qd~(\text{mod } d)$ and each sum in the third line is zero by property (3) since $\exp(2\pi\iota mr/s)$ is a primitive $d$th root of unity.

We now have $\mathscr{F}_s U_{r,s}^T U_{r,s}\mathscr{F}_s^* = L \otimes L$ and it remains to use properties of Kronecker product to show that $L \otimes L = \frac{1}{s^2r^2}S^TJJ^TS$. In particular we need associativity and the fact that for $A\in \mathbb{R}^{n\times n}$ and $B\in \mathbb{R}^{m\times m}$, we have
\begin{equation}
B \otimes A = S_{n,m}(A \otimes B)S_{n,m}^T
\end{equation}
where $S_{n,m}$ is the perfect shuffle matrix.
Note that $L = \frac{1}{sr}1_{r \times r} \otimes I_{s/r}$ where $1_{r \times r}$ is the $r \times r$ matrix of all ones. Then
\begin{equation}
\begin{split}
L \otimes L &= \frac{1}{s^2r^2} (1_{r \times r} \otimes I_{s/r}) \otimes (1_{r \times r} \otimes I_{s/r})\\
&= \frac{1}{s^2r^2} 1_{r \times r} \otimes (I_{s/r} \otimes (1_{r \times r} \otimes I_{s/r}))\\
&= \frac{1}{s^2r^2} 1_{r \times r} \otimes (S_{s/r,s}  ((1_{r \times r} \otimes I_{s/r})\otimes I_{s/r}) S_{s/r,s}^T)\\
&= \frac{1}{s^2r^2} 1_{r \times r} \otimes (S_{s/r,s}  (1_{r \times r} \otimes I_{s^2/r^2}) S_{s/r,s}^T)\\
&= \frac{1}{s^2r^2} (I_r \otimes S_{s/r,s}) (1_{r^2 \times r^2} \otimes I_{s^2/r^2})(I_r \otimes S_{s/r,s}^T)\\
&= \frac{1}{s^2r^2} S JJ^T S^T\\
\end{split}
\end{equation}
which completes the proof.
\end{proof}

\section{Experiment details}
\label{apx:experiment_details}
\renewcommand{\thetable}{\Alph{section}\arabic{table}}
\setcounter{table}{0}
\renewcommand{\thefigure}{\Alph{section}\arabic{figure}}
\setcounter{figure}{0}
\renewcommand{\thealgorithm}{\Alph{section}\arabic{algorithm}}
\setcounter{algorithm}{0}
\renewcommand{\theequation}{\Alph{section}\arabic{equation}}
\setcounter{equation}{0}

\begin{table*}[t]
	\centering
	\scriptsize
	\begin{tabular}{l r r r r}
		\toprule
		     & \multicolumn{4}{c}{\textbf{CIFAR-10}} \\\midrule
	            & Single conv & Multi-tier & Single conv lg. & Multi-tier lg. \\ \midrule
		Num. channels & 81 & (16,32,60) & 200 & (64,128,128) \\
		Num. params & 172,218 & 170,194  & 853,612 & 1,014,546 \\
		Epochs & 40 & 40 & 65 & 65 \\
		Initial lr & 0.001 & 0.01  & 0.001 & 0.001  \\
		Lr schedule & step decay & step decay & 1-cycle & 1-cycle \\
		Lr decay steps & 25 & 10  & - & - \\
		Lr decay factor & 10 & 10  & - & - \\
		Max learning rate & - & -  & 0.01 & 0.05 \\
		Data augmentation & - & - & \checkmark & \checkmark\\
		\midrule
		&&&\\
	\end{tabular}

	\begin{tabular}{l r r | r r r}
	    \toprule
		     &  \multicolumn{2}{c}{\textbf{SVHN}} & \multicolumn{3}{c}{\textbf{MNIST}} \\ \midrule
	         & Single conv & Multi-tier & FC & Single conv & Multi-tier \\ \midrule
	    Num. channels & 81 & (16,32,60) & 87* & 54 & (16, 32, 32) \\
		Num. params & 172,218 & 170,194  & 84,313 & 84,460 & 81,394 \\
		Initial lr & 0.001 & 0.001  & 0.001 & 0.001 & 0.001 \\
		Epochs & 40 & 40  & 40 & 40 & 40 \\
		Lr decay steps & 25 & 10 & 10 & 10 & 10\\
		Lr decay factor \% & 10 & 10 & 10 & 10 & 10\\
	    \bottomrule

	\end{tabular}
	\caption{Model hyperparameters. *FC is a dense layer with output dimension of 87.}
	\label{tab:params}
\end{table*}

\subsection{Model architecture}
Recall that a \mon{} is defined by a choice of linear operators $W$ and $U$, bias $b$, and nonlinearity $\sigma$, and that we parameterize $W$ via linear operators $A$ and $B$. For all experiments we use $\sigma=\text{ReLU}$. In the fully-connected network $A,B$ and $U$ are dense matrices; in the single-convolution network they are unstrided convolutions with kernel size 3. The structure of the multi-tier network is as described in (\ref{eqn:multi-tier_form}) and (\ref{eqn:multi-tier_form_2}); we use three tiers with unstrided convolutions for $U$ and $A_{ii},B_{ii}$ and stride-2 convolutions for the subdiagonal terms $A_{i,i+1}$, all with kernels of size 3. The number of channels for single and multi-tier convolutional models varies by dataset, as shown in Table \ref{tab:params}.

For all models, the fixed point $z^\star$ is mapped to logits $\hat{y}$ via a dense output layer, and the single convolution model first applies 4$\times$4 average pooling:
    $$\hat{y} = W_o z^\star + b_o~~~~~~\text{or}~~~~~~\hat{y} = W_o~\text{AvgPool}_{4\times4}(z^\star) + b_o.$$

\subsection{Training details}
 Because $W = (1-m)I - A^T A + B - B^T$ contains both linear and quadratic terms, we find that a variant of weight normalization helps to keep the gradients of the different parameters on the same scale. For example, when $W$ is a dense matrix, we reparameterize $A^TA$ as $g\frac{A^TA}{\|A\|^2}$ and $B$ as $h\frac{B}{\|B\|}$, where $g$ and $h$ are learned scalars. When $W$ consists of a single or multi-tiered convolutions, we reparameterize each convolution kernel analogously.

 All models are trained by running Peaceman-Rachford with error tolerence $\epsilon=$1e-2, which reduces the number of iterations without impacting performance. The monotonicity parameter $m$ also affects convergence speed since it controls the contraction factor of the relevant operators; consistent with this, we find that Peaceman-Rachford takes longer to converge for smaller $m$, and use $m=1$ for all models since model performance is not sensitive to $m\in [0.01,1]$. We also find that the Lipschitz parameter $L$ of $I-W$ increases during training, changing the optimal $\alpha$ value. We therefore tune $\alpha\in\{1,1/2,1/4,\ldots\}$ over the course of training so as to minimize forward-pass iterations.
 
 One detail about stopping criteria for the splitting method: computing the residual \linebreak $\|z^{k+1} -  f(z^{k+1})\|/\|z^{k+1}\|$ requires an additional call to the function $f$. Therefore during training we instead use the criterion $\|z^{k+1} -  z^{k}\|/\|z^{k+1}\| \leq \epsilon$. The error shown in Figure \ref{fig:convergence} is the former, while the stopping criterion used in Figures \ref{fig:tuning_alpha_fw} and \ref{fig:tuning_alpha_bw} is the latter. Technically this latter criterion itself depends on both $\alpha$ and $L$; for different $\alpha$ and $L$ values, having $\|z^{k+1} -  z^{k}\|/\|z^{k+1}\| \leq \epsilon$ implies different bounds on the residual. However, we find that this effect is minimal, so that both stopping criteria work equally well in practice.
 
Table \ref{tab:params} gives details of the training hyperparameters used for each model. All models are trained with ADAM \citep{DBLP:journals/corr/KingmaB14}, using batch size of 128. For all but the large CIFAR-10 models, the initial learning rate is chosen from \{1e-2, 1e-3\} and decayed by a factor of 10 after every 10 or 25 epochs, and the default ADAM momentum parameters are used. All training data is normalized to mean $\mu=0$, standard deviation $\sigma=1$.

\paragraph{CIFAR-10 with data augmentation} When training large models on CIFAR-10 we use standard data augmentation, consisting of zero-padding the 32$\times$32 images to 40$\times$40, then randomly cropping back to 32x32, and finally performing random horizontal flips. In order to reduce the number of training epochs, we use a single cycle of increasing and decreasing learning rate to achieve super-convergence \cite{smith2019super}. The learning rate is increased from 1e-3 to the max learning rate (see Table \ref{tab:params}) over 30 epochs, then decreased back to 1e-3 over 30 epochs, then held at 1e-3 for 5 epochs. (The max learning rate is chosen by training for a single epoch while increasing the learning rate until the loss diverges.) The momentum is also decreased from 0.95 to 0.85 over 30 epochs, then back to 0.95 over 30 epochs, then held at 0.95 for 5 epochs. However, we note that the model obtains the same performance when trained with constant learning rate of 1e-3 for around 200 epochs.

\subsection{Dataset statistics}

MNIST \cite{lecun2010mnist} consists of black and white examples of handwritten digits 0-9. SVHN \cite{netzer2011reading} consists of color images of digits 0-9 extracted from house numbers captured by Google Stree View. CIFAR-10 \cite{krizhevsky2009learning} consists of small images from 10 object classes. Dataset statistics are shown in Table \ref{tab:data}.

\begin{table*}[t]
	\centering
	\scriptsize
	\begin{tabular}{l r r r r}
		\toprule
		     & \textbf{Train examples} & \textbf{Test examples} & \textbf{Image dim.} & \textbf{Num. channels} \\ \midrule
		     MNIST & 60,000 & 10,000 & 28 $\times$ 28 & 1\\
		     SVHN  & 73,257 & 26,032 & 32 $\times$ 32 & 3\\
		     CIFAR-10 & 50,000 & 10,000 & 32 $\times$ 32 & 3\\
	    \bottomrule

	\end{tabular}
	\caption{Dataset statistics}
	\label{tab:data}
\end{table*}

\newpage
\section{Additional results and figures}
\label{apx:figs}
\renewcommand{\thetable}{\Alph{section}\arabic{table}}
\setcounter{table}{0}
\renewcommand{\thefigure}{\Alph{section}\arabic{figure}}
\setcounter{figure}{0}
\renewcommand{\thealgorithm}{\Alph{section}\arabic{algorithm}}
\setcounter{algorithm}{0}
\renewcommand{\theequation}{\Alph{section}\arabic{equation}}
\setcounter{equation}{0}


\begin{figure}[h]
\centering
\begin{subfigure}{.5\textwidth}
  \centering
  \includegraphics{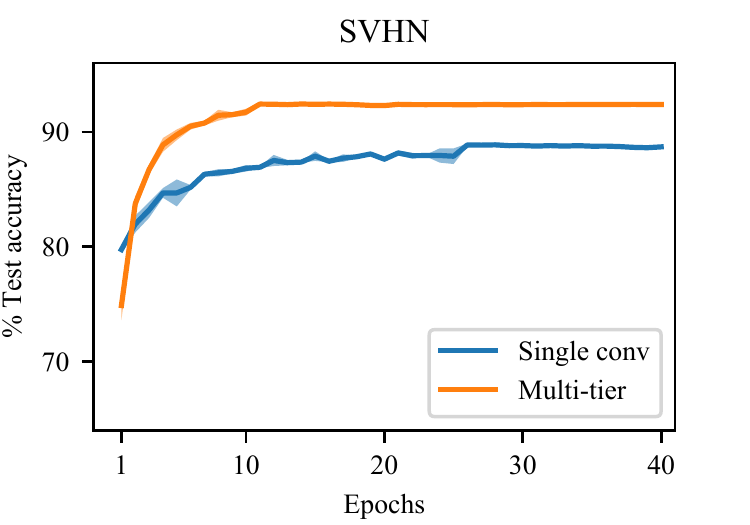}
\end{subfigure}%
\begin{subfigure}{.5\textwidth}
  \centering
  \includegraphics{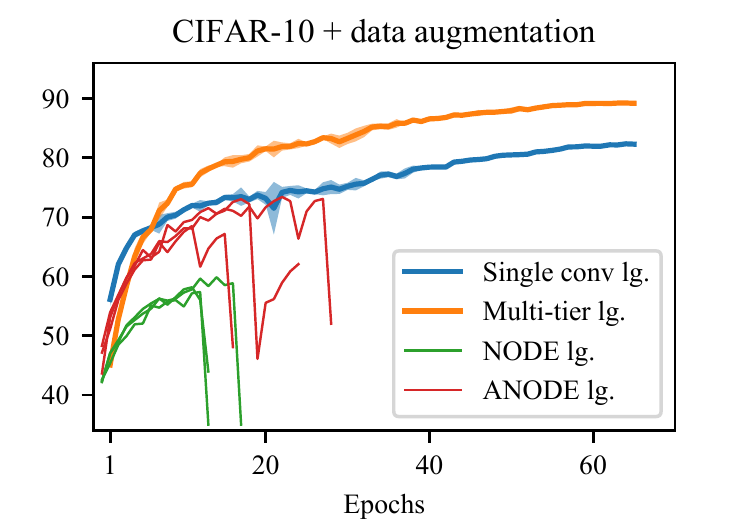}
\end{subfigure}\\
\begin{subfigure}{.5\textwidth}
  \centering
    \includegraphics{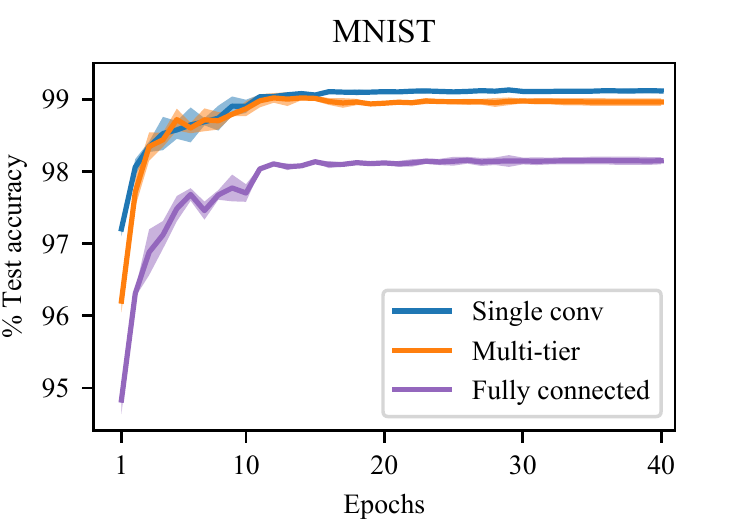}
\end{subfigure}
\caption{Test accuracy of \mon{}s and Neural ODE models during training.}
\label{fig:training}
\end{figure}

\begin{figure}[h]
\centering
  \begin{subfigure}{.5\textwidth}
    \centering
    \includegraphics{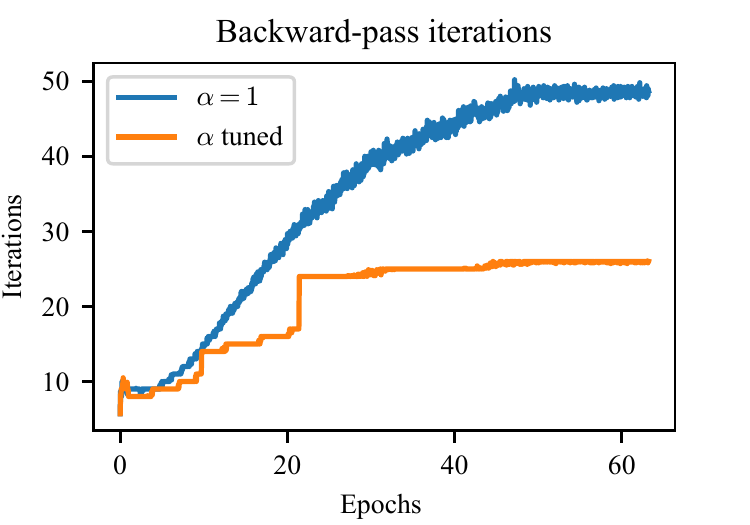}
  \end{subfigure}
\caption{Iterations required by Peaceman-Rachford backprop over the course of training.}
\label{fig:tuning_alpha_bw}
\end{figure}

\end{document}